\documentclass[conference]{IEEEtran}
\usepackage{times}

\usepackage[numbers]{natbib}
\usepackage{multicol}
\usepackage[bookmarks=true]{hyperref}

\usepackage{algpseudocode}

\usepackage{amsmath, amssymb}
\usepackage{xcolor}
\usepackage{graphicx}
\usepackage{comment}
\usepackage{subfig}
\usepackage[normalem]{ulem}
\usepackage{units}
\usepackage{algorithm}

\usepackage{enumitem}

\newcommand{\argmin}[1]{\underset{#1}{\operatorname{arg}\operatorname{min}}\;}
\renewcommand{\min}[1]{\underset{#1}{\operatorname{min}}\;}
\newcommand{\minimize}[1]{\underset{#1}{\t{minimize}}\;}
\renewcommand{\max}[1]{\underset{#1}{\operatorname{max}}\;}
\renewcommand{\lim}[1]{\underset{#1}{\operatorname{lim}}\;}
\newcommand{\subjectto}{\text{subject to}}

\newcommand{\etal}{\textit{et al}. }

\makeatletter
\def\blfootnote{\xdef\@thefnmark{}\@footnotetext}
\makeatother

\usepackage{soul}
\newcommand{\new}[1]{#1}

\newcommand{\smplong}{Planning on Sequenced Manifolds}
\newcommand{\smp}{PSM$^*$}

\renewcommand{\v}{\boldsymbol}
\renewcommand{\t}[1]{{\textrm{#1}}}

\newcommand{\R}{\mathbb{R}}

\newcommand{\eq}{Equation~} 

\newcommand{\fig}{Figure~}
\newcommand{\alg}{Algorithm~}

\newcommand{\sect}{Section~}

\newcommand{\opt}{^\star}

\ifdefined\T
\renewcommand{\T}{^\top}
\else
\newcommand{\T}{^\top}
\fi

\newcommand{\mat}[3][.9]{
  \renewcommand{\arraystretch}{#1}{\scriptscriptstyle{\left(
    \hspace*{-1ex}\begin{array}{#2}#3\end{array}\hspace*{-1ex}
  \right)}}\renewcommand{\arraystretch}{1.2}
}

\newcommand{\specialcell}[2][c]{
	\begin{tabular}[#1]{@{}c@{}}#2\end{tabular}}

\newcommand{\inv}{^{-1}}

\newcommand{\comma}{~,}
\newcommand{\period}{~.}

\newcommand{\ga}{\alpha}
\newcommand{\gb}{\beta}

\renewcommand{\ge}{\epsilon}
\renewcommand{\gg}{\gamma}

\newcommand{\gl}{\lambda}

\newcommand{\gr}{\rho}

\newtheorem{lemma}{Lemma}

\newtheorem{theorem}{Theorem}
\newtheorem{definition}{Definition}

\pdfinfo{
  /Author (Peter Englert, Isabel M. Rayas~Fernández, Ragesh K. Ramachandran, Gaurav S. Sukhatme)
  /Title  (Sampling-Based Motion Planning on Sequenced Manifolds)
  /Subject (Robots)
  /Keywords (Motion Planning;Robots;Constraint Planning)
}

\begin{document}

\title{Sampling-Based Motion Planning on\\Sequenced Manifolds}

\author{\authorblockN{Peter Englert,
Isabel M.\ Rayas~Fernández,
Ragesh K.\ Ramachandran, 
Gaurav S.\ Sukhatme*}
Robotic Embedded Systems Laboratory \\University of Southern California, Los Angeles, CA}

\maketitle

\begin{abstract}
We address the problem of planning robot motions in constrained configuration spaces where the constraints change throughout the motion.
The problem is formulated as a \new{fixed} sequence of intersecting manifolds, which the robot needs to traverse in order to solve the task.
We specify a class of sequential motion planning problems that fulfill a particular property of the change in the free configuration space when transitioning between manifolds. 
For this problem class, the algorithm \smplong~(\smp) is developed which searches for optimal intersection points between manifolds by using RRT$^*$ in an inner loop with a novel steering strategy. We provide a theoretical analysis regarding \smp s probabilistic completeness and asymptotic optimality. Further, we evaluate its planning performance on multi-robot object transportation tasks. \\
Video: \textit{\url{https://www.youtube.com/watch?v=Q8kbILTRxfU}}\\
Code: \textit{\url{https://github.com/etpr/sequential-manifold-planning}}
\end{abstract}
\blfootnote{
\hspace{-1.7em}
* Gaurav Sukhatme holds concurrent appointments as a Professor at USC and as an Amazon Scholar. This paper describes work performed at USC and is not associated with Amazon.\\
Contact: \{penglert, rayas, rageshku, gaurav\}@usc.edu}

\IEEEpeerreviewmaketitle

\section{Introduction}
\label{sec:introduction}
Sampling-based motion planning (SBMP) considers the problem of finding a collision-free path from a start configuration to a goal configuration. Probabilistic roadmaps \cite{kavraki1994randomized} or rapidly exploring random trees \cite{lavalle1998rapidly} generate plans for such motions while providing theoretical guarantees regarding probabilistic completeness. Optimal algorithms like RRT$^*$ \cite{karaman2011sampling} additionally minimize a cost function and achieve asymptotic optimality. However, many tasks in robotics 
require the incorporation of additional objectives like subgoals or constraints.
SBMP methods are difficult to directly apply to such problems because they require splitting the overall task into multiple planning problems that fit into the SBMP structure.

We consider a 
formulation for sequential motion planning where a problem is represented as a fixed sequence of manifolds.
\begin{figure}[t]
	\centering
 	\includegraphics[width=.28\textwidth]{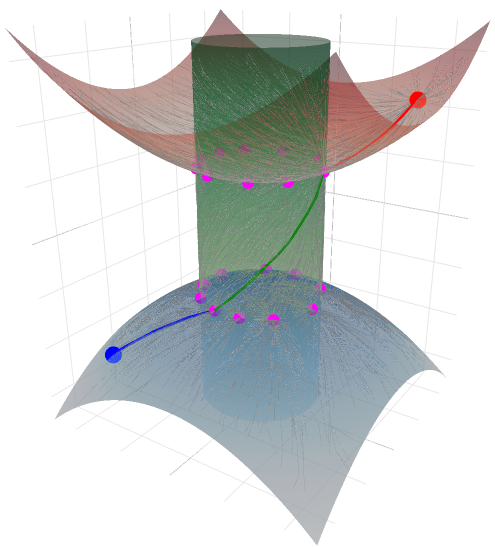}
	\caption{3D Point on Geometric Constraints -- The surfaces visualize the level sets of the three constraints. The task is to move from the start point (red dot) to the goal point (blue dot) while fulfilling the constraints. The line shows a solution found by \smp~that fulfills these constraints and the magenta points are the found intersection vertices.}
	\label{fig:hourglass}
	\vspace{-0.1in}
\end{figure}
We propose \smplong~(\smp), an algorithm that searches for an optimal path that starts at an initial configuration, traverses the manifold sequence, and converges when the final manifold is reached. 
Our solution is to grow a single tree over the manifold sequence.
This tree consists of multiple subtrees that originate at the intersections between pairs of manifolds. We propose a novel steering strategy that guides the robot towards these manifold intersections. After an intersection is reached, a new subtree is initialized with the found intersection points and their costs. 
We use dynamic programming over optimal intersection points which scales well to long horizon tasks since a new subtree is initialized for every manifold.
\new{The algorithm is applicable to a class of problems we call \emph{intersection point independent}, specified by a property which restricts how the free configuration space changes across subtasks (see \sect \ref{sec:problem_formulation}).}

A running example in this paper is the task of using a robot arm to transport a mug from one table to another while keeping the orientation of the mug upright, a task involving multiple phases and constraints described informally as follows.
First, the arm moves to pick up the mug. In this subtask, the arm can move freely in space and only needs to avoid collisions with obstacles. The second subtask is to grasp the mug. The third subtask is for the arm to transfer the mug with the constraint of always keeping the mug upright. The final subtask is to place and release the mug which requires the base of the mug to be near the table. 
In \sect \ref{sec:experiments}, we demonstrate the performance of \smp~on similar sequential kinematic planning problems that involve multiple robots and compare it to alternative planning strategies.

\new{The main contributions of this work are:
\begin{itemize}
    \item identification of the problem class of intersection point independent motion planning problems (\sect \ref{sec:problem_formulation}), which is a subclass of the more general multi-modal motion planning problem \cite{hauser2011randomized}, 
    \item  an algorithm, \smp, 
    that explores intersection points reachable from the start by growing a single tree over a given sequence of manifolds (\sect \ref{sec:method}),
    \item experimental evaluation of our method on various transportation tasks involving multiple robots and on a pouring task with a real Panda robot (\sect \ref{sec:experiments}), and comparisons to other methods,
    \item proofs of probabilistic completeness and asymptotic optimality (\sect \ref{ssec:comp-optimal}) of \smp.
\end{itemize}}

    \section{Related Work}
    \label{sec:related_work}
    \subsection{Sampling-Based Motion Planning}
    Sampling-based motion planning (SBMP) is a randomized approach to path planning that builds a tree or graph in the robot configuration space.
    A PRM (probabilistic roadmap) is a path planner that builds a graph in the free configuration space that can be used for multiple queries \cite{amato1996randomized, kavraki1994randomized, overmars1992random}. 
    Tree methods such as RRTs (rapidly-exploring random trees) are generally single-query, taking a specific start state from which a tree of feasible states is grown toward a specific goal state or region \cite{lavalle1998rapidly}. 
    Many extensions to RRT exist, such as bidirectional trees and goal biasing \cite{kuffner2000rrt, lavalle2006planning}. Optimal variants RRT$^*$ and PRM$^*$ find paths that minimize a cost function and guarantee asymptotic optimality by using a rewiring procedure on the edges in the graphs \cite{karaman2011sampling}. All these techniques consider the problem of planning without motion constraints; that is, they plan in the free configuration space. 
    
    We use a modified version of RRT$^*$ in the inner loop of \smp~that can handle goals defined in terms of equality constraints instead of a goal configuration. Restricting the problem class to intersection point independent problems (\sect \ref{ssec:ipip}) allows us to grow a single tree over a sequence of manifolds on which rewiring operations can still be performed. We show that \smp~inherits the probabilistic completeness and asymptotic optimality guarantees of RRT$^*$ (\sect \ref{ssec:comp-optimal}).

    \subsection{Constrained Sampling-Based Motion Planning}
    Many practical tasks require planning with constraints. 
    Constrained SBMP algorithms \cite{stilman2010global, berenson2011task, jaillet2013asymptotically, kim2016tangent, kingston2019ijrr, csucan2012motion, cortes2004sampling} extend SBMP to constrained configuration spaces (see \citet{kingston2018sampling} for a review). 
    These are of smaller dimension than the full configuration space and usually cannot be sampled directly, thus a large focus of these methods is how to generate samples and steer the robot while fulfilling the constraints.
    
    One family of approaches are projection-based strategies \cite{berenson2011task, stilman2010global, csucan2012motion, kaiser2012constellation} that first sample a configuration from the ambient configuration space and then project it using an iterative gradient descent strategy to a nearby configuration that satisfies the constraint. 
    Berenson \etal \cite{berenson2011task} proposed CBiRRT (constrained bidirectional RRT) that uses projections to find configurations on the constraints. The constraints are described by task space regions, which are a representation of pose constraints. Their method can also be used for multiple constraints over a single path. However, their approach requires each constraint's active domain to be defined prior to planning, or configurations are simply projected to the nearest manifold rather than respecting a sequential order. Our approach is perhaps closest to the CBiRRT algorithm. 
    A main difference is that our method assumes a different problem formulation where the task is given in terms of a fixed sequence of manifolds where the intersections between manifolds describe subgoals that the robot should reach. 
    This formulation allows us to define a more structured steering strategy that guides the robot towards the next subgoals. 
    Another difference is that CBiRRT does not search for optimal paths, while we employ RRT$^*$ to minimize path lengths. We compare our method \smp~to CBiRRT in \sect \ref{sec:experiments}.
    
    An alternative is to approximate the constraint surface by a set of local models and use this approximation throughout the planning problem for sampling or steering operations \cite{jaillet2013asymptotically, jaillet2013path, kim2016tangent,suh2011tangent, stilman2010global, bordalba2018randomized}.
    For example, AtlasRRT \cite{jaillet2013path} builds an approximation of the constraint consisting of local charts defined in the tangent space of the manifold. This representation is used to generate samples that are close to the constraint. 
    Similarly, Tangent Bundle RRT \cite{kim2016tangent, suh2011tangent} builds a bidirectional RRT by sampling a point on a tangent plane, extending this point to produce a new point, and if it exceeds a certain distance threshold from the center of the plane, projecting it on the manifold to create a new tangent plane. 
    
    Kingston \etal \cite{kingston2019ijrr} presented the implicit manifold configuration space (IMACS) framework that decouples two parts of a geometrically constrained motion planning problem: the motion planning algorithm and the method for constraint adherence. With this approach, IMACS acts as a representative layer between the configuration space and the planner. 
    Many of the previously mentioned techniques fit into this framework. They present examples with both projection-based and approximation-based methods for constraint adherence in combination with various motion planning algorithms. Here, we propose a method that builds on these constrained SBMP methods and extends them towards sequential tasks where the active constraints change during the motion. 
    
    \subsection{\new{Planning with Sequential Tasks}}
    One approach to plan sequential motions is task and motion planning (TAMP), which requires semantic reasoning on selecting and ordering actions to complete a higher-level task \cite{kaelbling2013integrated, dantam2016incremental, konidaris2018skills, toussaint2018differentiable, dantam2018task, barry2013hierarchical, cambon2009hybrid, kingston2020informing, pflueger2015multi}. 
    Broadly speaking, TAMP is more general and difficult to solve compared to SBMP due to scalability issues and a more complicated problem definition. 
    Though we do not address task planning in this work, it is an interesting future direction we plan to consider. 
    
    \citet{hauser2011randomized} proposed the multi-modal motion planning algorithm Random-MMP, which plans motions over multiple mode switches that describe changes in the planning domain (e.g., contacts). 
    Their planner builds a tree in a hybrid state using an SBMP that consists of the continuous robot configuration and a discrete mode, which changes based on the domain. 
    Other previous work \cite{vega2016asymptotically, kingston2020informing, hauser2011randomized, simeon2004manipulation, simeon2000visibility, mirabel2016hpp, schmitt2019modeling} addressed this problem in related ways. They generally assume sampling of the modes and their boundaries, and build graphs that connect these to each other.

    Our work is similar in that we also consider planning over multiple manifolds where changes in the configuration space occur due to picking or placing objects. 
    \new{However, our work differs in several key ways. First, TAMP approaches assume the task sequence is unknown in advance, and thus selecting the transition to the next constraint is part of the planning problem. Here we specify our problem formulation over a fixed and known sequence of constraints, focusing our algorithm on optimizing the transition points between constraints to find an optimal path.} 
    Additionally, we do not assume direct sampling of modes or switches is possible; rather, our algorithm is designed to find the mode-switching configurations during exploration toward manifold intersections. 
    \new{Importantly, because we build one search tree, these configurations are guaranteed to be connected to the start. This eliminates sampling of configurations that are disconnected from the viable solution space.}
    \new{Further, we differ by applying \smp~in the domain of intersection point independent problems, 
    which we specify for sequential planning problems based on the free space transformation at each manifold intersection (\sect \ref{ssec:ipip}).
    These problems do not include foliated manifolds and are efficiently solvable by growing a single tree.}
    
    Finally, trajectory optimization \cite{stryk1992direct, schulman2013finding, toussaint2017newton, ratliff2015understanding, pavone2019rss} is another approach to solve sequential motion planning problems where an optimization problem over a trajectory is defined that minimizes costs subject to constraints. 
    Our problem formulation is similar. \new{However, trajectory optimization often depends on having a good initial trajectory and can suffer from poor local minima. 
    The time points in the trajectory at which costs and constraints are active also need to be specified precisely, which can be challenging since it is to difficult to know in advance how long a specific part of a task takes.}
    Our approach only requires the sequential order of tasks and does not make any assumption of specific time points or durations of subtasks.
\section{Preliminaries}
An important idea in differential geometry (see \cite{Boothby:107707} for a rigorous treatment) is the concept of a \textit{manifold} -- a surface which can be well-approximated locally using an open set of a Euclidean space. In general, manifolds are represented using a collection of local regions called \textit{charts} and a continuous map associated with each chart such that the charts can be continuously deformed to an open subset of a Euclidean space. An alternate representation of manifolds, which is useful from a computational perspective, is to express them as zero level sets of functions defined on a Euclidean space. This is called an \textit{implicit representation} of the manifold. 
For example, a unit sphere embedded in $\R^3$ can be represented implicitly as \mbox{$\{x \in \R^3 ~|~ \|x\|-1 = 0\}$.} An implicit manifold is said to be \textit{smooth} if the implicit function associated with it is smooth. The set of all tangent vectors at a point on a manifold is a vector space called the \textit{tangent space} of the manifold at that point. The \textit{null space} of the Jacobian of the implicit function at a point is isomorphic to the tangent space of the corresponding manifold at that point. Since the tangent spaces of a manifold are vector spaces, we can equip them with an inner product structure which enables the computation of the length of curves traced on the manifold. A manifold endowed with an inner product structure is called a Riemannian manifold \cite{lee2006riemannian}. Here we only consider smooth Riemannian manifolds. 
\section{Problem Formulation and Application Domain}
\label{sec:problem_formulation}
We consider kinematic motion planning problems in a configuration space $C\subseteq \R^k$. A configuration $q\in C$ describes the state of one or multiple robots with $k$ degrees of freedom in total. 
We represent a manifold $M$ as an equality constraint $h_{M}(q)=0$ where $h_{M}(q) : \R^{k} \to \R^{l}$
and $l$ is the intrinsic dimensionality of the constraint ($l \leq k$). The set of robot configurations that are on a manifold $M$ is given by $C_{M} = \{q\in C ~|~ h_{M}(q) = 0\}\period$
We define a projection operator $q_\t{proj} =  \textit{Project}(q, M)$
that takes a robot configuration $q\in C$ and a manifold $M$ as inputs and maps $q$ to a nearby configuration on the manifold $q_\t{proj}\in C_M$. 
We use an iterative optimization method, similar to \cite{kingston2019ijrr, berenson2011task, stilman2007task}, that iterates 
$$q_{n+1} = q_n - J_{M}(q_n)^+ h_{M}(q_n)$$
until a fixed point on the manifold is reached, which is checked by the condition $||h_M(q_\t{proj})|| \leq \ge$. The matrix $J_{M}(q)^+$ is the pseudo-inverse of the constraint Jacobian $J_{M}(q) = \frac{\partial }{\partial q} h_{M} (q)\period$

We are interested in solving tasks that are defined by a sequence of $n+1$ such manifolds 
$\mathcal{M} = \{M_1, M_2, \dots, M_{n+1}\}$
\new{where the number and order of manifolds is given,} and an initial configuration $q_\t{start}\in C_{M_1}$ that is on the first manifold.
The goal is to find a path from $q_\t{start}$ that traverses the manifold sequence $\mathcal{M}$ and reaches a configuration on the goal manifold $M_{n+1}$.
A path on the $i$-th manifold is defined as \mbox{$\tau_i : [0, 1] \to C_{M_i}$}
and $J(\tau_i)$ is a cost function of a path
\mbox{${J} : \mathcal{T} \to \R_{\geq 0}$}
where $\mathcal{T}$ is the set of all non-trivial paths. 

The free configuration space $C_\t{free}$ is the subset of the configuration space that does not result in the robot colliding with an obstacle. In the following problem formulation, we consider the scenario where the free configuration space changes during the task (e.g., when picking or placing objects). 
We define an operator $\Upsilon$ on the free configuration space $C_\t{free}$, which describes how $C_\t{free}$ changes as manifold intersections are traversed. 
$\Upsilon$ takes as input the \new{endpoint of a} path $\tau_{i-1}(1)$ on the previous manifold $M_{i-1}$ and its associated $C_\t{free}$, which we denote as $C_{\t{free}, i-1}$. $\Upsilon$ outputs an updated free configuration space 
$C_{\t{free}, i} = \Upsilon(C_{\t{free}, i-1}, \tau_{i-1}(1))$
that accounts for the geometric changes due to transitioning to a new manifold. 
\new{Here, we assume these changes only occur at the transition between two manifolds.}
\new{We define $\Upsilon$ in this way rather than with the configuration spaces of the robot and the objects in the environment because our formulation does not have a notion of ``objects"; any object in the world is either an obstacle, or it becomes part the robot system itself.}

We assume access to a collision check routine
$\textit{CollisionFree}(q_a, q_b, C_{\t{free}, i}) \to \{0,1\}$ that returns $1$ if the path between two configurations on the manifold $q_a, q_b \in C_{M_i}$ is collision-free, $0$ otherwise. Note that in this work, we use the straight-line path in ambient space between two nearby points for checking collisions.

Returning to our illustrative example, we can now describe one of its constraints more precisely. A grasp constraint can be described with \mbox{$h_M(q) = x_g - f_{\t{pos}, e}(q)$} where $f_{\t{pos}, e}$ is the forward kinematics function of the robot end effector point $e$ and $x_g$ is the grasp location on the mug. 
This constraint can be fulfilled for multiple robot configurations $q$, which correspond to different hand orientations that will affect the free configuration space for the subsequent tasks. For example, a mug grasped from the side will have a different free space during the transport phase than a mug grasped from the top.

\subsection{Problem Formulation}
We formulate an optimization problem over a set of paths $\v\tau = (\tau_1, \dots, \tau_{n})$ that minimizes the sum of path costs under the constraints of traversing $\mathcal{M}$ and of being collision-free. The \emph{planning on sequenced manifolds problem} is
\begin{align} 
	\begin{alignedat}{2} 
	\label{eq:smp_problem}
	\v\tau\opt =~ &\argmin{\v\tau} \sum_{i=1}^n J(\tau_i) &&\\
	\text{s.t.}\quad &\tau_1 (0) = q_\t{start} &\\
	&\tau_i(1) = \tau_{i+1}(0)  &&\forall_{i=1,\dots,n-1} \\ 
	&C_{\t{free}, i+1} = \Upsilon(C_{\t{free}, i}, \tau_{i}(1))~~~~ &&\forall_{i=1,\dots,n}\\
	&\tau_i(s) \in C_{M_i} \cap C_{\t{free}, i} &&\forall_{i=1,\dots,n}~ \forall_{s \in [0, 1]}\\
	&\tau_{n}(1) \in C_{M_{n+1}} \cap C_{\t{free}, n+1} &&
	\end{alignedat} 
\end{align}
The second constraint ensures continuity such that the endpoint of a path $\tau_i(1)$ is the start point of the next path $\tau_{i+1}(0)$. The third constraint captures the change in the collision-free space defined by the operator $\Upsilon$. The last two constraints ensure that the path is collision-free and on the corresponding manifolds. The endpoint of $\tau_{n}$ must be on the goal manifold $M_{n+1}$, which denotes the successful completion of the task.
\new{Note that this problem can be seen as a special case of the multi-modal motion planning problem presented in \cite{hauser2011randomized} where the manifolds are provided in their sequential order.} The problem defined in \eq \eqref{eq:smp_problem} is difficult to solve directly for multiple reasons. One difficulty lies in computing gradients of the cost and constraints through manipulation operations like picking and placing objects. Another difficulty is the specification of good initial guesses for the individual paths.

\new{There are several advantages to formulating the problem with manifolds.} 
One is that it is not necessary to choose a specific target configuration in $C$ and thus a wider range of goals 
can be described in the form of manifolds. 
Another is that it is possible to describe complex sequential tasks in a single planning problem, not requiring the specification of subgoal configurations. 
The algorithm proposed in \sect \ref{sec:method} searches for a path that solves this optimization problem for the problem class described in the next section.

\begin{figure}[t]
    \centering
    \includegraphics[width=1\linewidth]{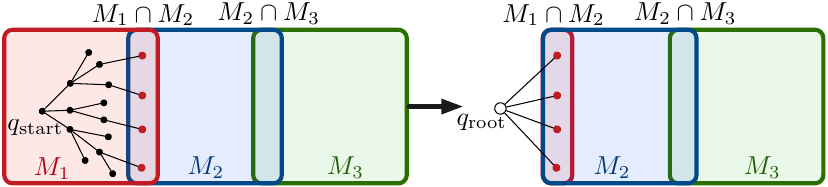}
    \caption{Initialization of a new subtree (steps \ref{alg:newtreeinit_start}--\ref{alg:newtreeinit_end} of \alg \ref{alg:smp}). The reached goal nodes at the intersection $M_1\cap M_2$ are used to initialize the next tree where $q_\t{root}$ is a synthetic root node that maintains the tree structure.}
    \label{fig:tree_init}
    \vspace{-0.25in}
\end{figure}

\subsection{Intersection Point Independent Problems}
\label{ssec:ipip}
We now use 
\eqref{eq:smp_problem} to describe robotic manipulation planning problems in which the manifolds describe subtasks for the robot to complete (e.g., picking up objects). The solution is an end-to-end path across multiple task constraints.

For a certain class of sequential manifold planning problems, the following property holds: For each manifold intersection \mbox{${M}_{i} \cap {M}_{i+1}$}, the free space output by $\Upsilon$ is set-equivalent for every possible path $\tau_{i}$. In other words, the precise action taken to move the configuration from one constraint to the next does not affect the feasible planning space for the subsequent subtask. When this property holds for all intersections, we call the problem \emph{intersection point independent}. The condition for this class of problems is
\begin{align}
	\begin{alignedat}{2} 
		\label{eq:ip_independence}
		&\forall i \in [0, n] ~\forall \tau_i, \tau_i' \in \mathcal{T} : \tau(1), \tau'(1) \in C_{M_i} \cap C_{M_{i+1}} \\
		&\Rightarrow\Upsilon(C_{\t{free}, i}, \tau(1)) \equiv \Upsilon(C_{\t{free}, i}, \tau'(1))
	\end{alignedat}
\end{align}
where $\equiv$ denotes set-equivalence. In this work, we focus on the intersection point independent class of motion planning problems, which encompass a wide range of common problems.
For grasping constraints, a notion of object symmetry about the grasp locations results in intersection point independent problems. If the object to be grasped is a cylindrical can, for example, allowing grasps to occur at any point around the circumference but at a fixed height would be intersection point independent. Any two grasps with the same relative orientation of the gripper result in the same free configuration space of the system (robot + can).
However, suppose the grasps can occur at any height on the can. Now, a grasp near the top of the can and a grasp in the middle of the can result in different free configuration spaces, and thus this would be an intersection point dependent problem.

Focusing on intersection point independent problems allows us to define an efficient algorithm that grows a single tree over a sequence of manifolds. The more general intersection point dependent problem covers a wider class of problems. However, they are more difficult to solve because they require handling foliated manifolds \cite{kim2014randomized} (e.g., every grasp leads to a different manifold). 
Our illustrative example is one such problem, since the grasp on the handle and the grasp from the top of the mug result in different free spaces. We plan to address this more complex problem class in future work and provide some insights in \sect \ref{sec:conclusion} how the proposed algorithm could be used to tackle it.
\section{\smplong}
\label{sec:method}
We now present the algorithm \smp~that solves the problem formulated in \eq \eqref{eq:smp_problem} \new{for tasks that fulfill the intersection point independent property defined in \eq \eqref{eq:ip_independence}}.
The algorithm searches for an optimal solution to the constrained optimization problem, which is a sequence of paths $\v\tau = (\tau_1, \dots, \tau_{n})$ where each $\tau_i$ is a collision-free path on the corresponding manifold $M_i$. 

The steps of \smp~are outlined in \alg \ref{alg:smp}.
The input to the algorithm is a sequence of manifolds $\mathcal{M}$, an initial configuration \mbox{$q_\t{start}\in M_1$} on the first manifold, 
\new{and the hyperparameters of the algorithm $(\ga, \gb, \gr, \ge, r, m)$ (see Table \ref{tab:exp_parameter}).
$V, E$ are the vertices and edges, respectively, in the tree that the algorithm builds, while $C$ is the robot configuration space and $C_{\t{free}, 1}$ is the initial free configuration space.}

The overall problem is divided into $n$ subproblems, \new{which correspond to the number of manifold intersections in $\mathcal{M}$.} Each subproblem \new{can be broadly thought of as growing a tree between two consecutive manifold intersections.} We give a high-level summary here.
In the inner loop (lines \ref{alg:innerloop_start}--\ref{alg:innerloop_end}; see \sect \ref{ssec:phase12}), a tree is iteratively grown from a set of initial nodes toward the intersection with the next manifold.
First, a new constraint-adhering configuration $q_\t{new}$ is found by sampling a point, steering, and projecting it (steps \ref{alg:steer_start}--\ref{alg:steer_project_step}).
Second, in steps \ref{alg:check_proj}--\ref{alg:extend_end}, the point is checked for validity before being added to the tree as well as the set of intersection points $V_\t{goal}$ (if applicable).
\new{After the inner loop completes}, the algorithm initializes the next subproblem with the found intersection points in steps \ref{alg:newtreeinit_start}--\ref{alg:next_freespace} (see \sect \ref{ssec:phase3}). It returns the optimal path that traverses all manifolds in $\mathcal{M}$.

\subsection{Inner Loop: Growing a Tree to the Next Manifold}
\label{ssec:phase12}
The first phase of the inner loop focuses on producing a new candidate configuration $q_\t{new}$ which lies on the constraint manifold $M_i$ and adding it to the tree. 
The \smp \_STEER routine in \alg \ref{alg:steer_project} computes $q_\t{new}$.
Instead of targeting a single goal configuration as in general SBMP, we propose two novel steering strategies that steer toward the intersection between manifolds $M_i\cap M_{i+1}$. 
In steps \ref{alg:steer_start}--\ref{alg:steer_end} of \alg \ref{alg:smp}, a new target point $q_\t{rand}$ in the configuration space is sampled and its nearest neighbor $q_\t{near}$ in the tree is computed. Next, in \alg \ref{alg:steer_project} one of the following two steering strategies is selected to find a direction $d$ in which to extend the tree:

\subsubsection{\new{$\t{SteerPoint}(q_\t{near}, q_\t{rand}, M_i)$}}
\label{sec:steer_point}
In this extension step, the robot is at $q_\t{near}\in C_{M_i}$ and should step towards the target configuration $q_\t{rand}\in C$ while staying on the manifold $M_i$. We formulate this problem as a constrained optimization problem of finding a curve $\gg ~:~ [0,1]\to C$ that
\begin{align}
    \begin{alignedat}{2}
    	&\minimize{\gg} &&||\gg(1) - q_\t{rand}||^2\\
		&\subjectto~~&&\gg(0) = q_\t{near},\quad \int_{0}^1 ||\dot{\gg}(t)|| ~\mathrm{d}s \leq \ga\\
		& & &h_{M_i}(\gg(s)) = 0 \quad\qquad \forall~ s\in [0,1]
	\label{eq:steer_point}
	\end{alignedat}
\end{align}
This problem is hard to solve due to the nonlinear constraints. Since the steering operations are called many times in the inner loop of the algorithm, we choose a simple curve representation and only compute an approximate solution to this problem. We parameterize the curve as a straight line $\gg(s) = q_\t{near} + s\ga \frac{d}{||d||}$ with length $\ga$
\new{where the direction $d$ is chosen as orthogonal projection of $q_\t{rand}-q_\t{near}$ onto the tangent space of the manifold at $q_\t{near}$.}
We apply a first-order Taylor expansion of the manifold constraint $h_{M_i}(q_\t{near} + d) \approx h(q_\t{near}) + J_{M_i}(q_\t{near}) d$, which reduces the problem to
\begin{align}
	\begin{alignedat}{3}
	&\minimize{d} &&\frac{1}{2} ||d  - (q_\t{rand}-q_\t{near})||^2\\
	&\subjectto~~ &&J_{M_i}(q_\t{near}) d = 0
	\end{alignedat}
\end{align}
The optimal solution of this problem is
\begin{align}
	\begin{alignedat}{2}
	d &=(I -  J_{M_i}\T (J_{M_i} J_{M_i}\T)\inv J_{M_i}) (q_\t{rand}-q_\t{near})\\
	&= V_{\bot} V_{\bot}\T (q_\t{rand}-q_\t{near})
	\label{eq:steer_point_sol}
	\end{alignedat}
\end{align}
where $V_{\bot}$ contains the singular vectors that span the right nullspace of $J_{M_i}$ \cite{ratliff2014multivariate}. We normalize $d$ later in the algorithm, and thus do not include the constraint $||d||\leq \alpha$ in the reduced optimization problem. The new configuration $q_\t{near} + \alpha \frac{d}{||d||}$ will be on the tangent space of the manifold at configuration $q_\t{near}$, so that only few projection steps will be necessary before it can be added to the tree. \new{Note that projection may fail to converge, and if this occurs the sample is discarded.}

\begin{algorithm}[t]
\begin{algorithmic}[1]
	\algrenewcommand\algorithmicindent{.8em}
	\State $V_1 = \{q_{\t{start}}\}$; $E_1 = \emptyset$; $n=\text{len}(\mathcal{M})-1$
	\For{$i=1$ to $n$}
		\State $V_\t{goal} = \emptyset$
		\For{$k = 1$ to $m$} \label{alg:innerloop_start}
			\State $q_\t{rand} \leftarrow \t{Sample}(C)$  \label{alg:steer_start}
			\State $q_\t{near} \leftarrow \t{Nearest}(V_i, q_\t{rand})$
			\State{$q_\t{new} \leftarrow \t{\smp\_STEER} (\alpha, \beta, r, q_\t{near}, q_\t{rand}, M_i, M_{i+1}$)} \label{alg:steer_project_step}
	\If{\new{$||h_{M_{i}} (q_\t{new})|| > \ge$}} \label{alg:check_proj}
	    \State continue
	\EndIf
			\If{RRT$^*$\_EXTEND$(V_i, E_i, q_\text{near}, q_\text{new}, C_{\text{free},i})$} \label{alg:extend_start}
				\If{$||h_{M_{i+1}} (q_\t{new})|| < \ge$ \label{alg:next_manifold} \textbf{and}\\ $\hspace{5.5em} ||\t{Nearest}(V_\t{goal}, q_\t{new}) - q_\t{new}|| \geq \gr$} \label{alg:avoid_duplicates}
					\State $V_\t{goal} \leftarrow V_\t{goal} \cup \{q_\t{new}\}$
				\EndIf
			\EndIf \label{alg:extend_end}
        \EndFor \label{alg:innerloop_end}
		\State{// initialize next tree with the intersection nodes}
		\State $q_\t{root}=\t{null}, V_{i+1} = \{q_\t{root}\}; E_{i+1}=\emptyset$ \label{alg:newtreeinit_start}
		\For{$q \in V_\t{goal}$}
			\State $V_{i+1} \leftarrow V_{i+1} \cup \{q \}; E_{i+1} \leftarrow E_{i+1} \cup \{(q_\t{root}, q )\}$
		\EndFor\label{alg:newtreeinit_end}
		\State{\new{$C_{\t{free},i+1} \leftarrow \Upsilon(C_{\t{free},i}, V_\t{goal}[0])$}} \label{alg:next_freespace}
	\EndFor
	\State \textbf{return} $\mathrm{OptimalPath}(V_{1:n}, E_{1:n}, q_\t{start}, M_{n+1})$
\end{algorithmic}
\caption{\smp~$(\mathcal{M}, q_{\t{start}}, \ga, \gb, \ge, \rho, r, m)$}
\label{alg:smp}
\end{algorithm}

\begin{algorithm}[t]
\begin{algorithmic}[1]
	\algrenewcommand\algorithmicindent{1.0em}
	\If{\new{$\t{Sample}(\mathcal{U}(0,1)) < \gb$}}
		\State $d \leftarrow \t{SteerConstraint}(q_\t{near}, M_i, M_{i+1})$ 
	\Else
		\State $d \leftarrow \t{SteerPoint}(q_\t{near}, q_\t{rand}, M_i)$
	\EndIf
	\State $q_\t{new} \leftarrow q_\t{near} + \alpha \frac{d}{||d||}$ \label{alg:steer_end}
	\If{$||h_{M_{i+1}} (q_\t{new})|| <$ \new{$ \t{Sample}(\mathcal{U}(0,r))$}} \label{alg:project_start}
		\State $q_\t{new} \leftarrow \t{Project}(q_\t{new}, M_{i} \cap M_{i+1})$
	\Else
		\State $q_\t{new} \leftarrow \t{Project}(q_\t{new}, M_{i})$
	\EndIf \label{alg:project_end}
	\State \textbf{return} $q_\t{new}$
\end{algorithmic}
\caption{$\t{\smp\_STEER} (\alpha, \beta, r, q_{\t{near}}, q_{\t{rand}}, M_i, M_{i+1})$}
\label{alg:steer_project}
\end{algorithm}

\subsubsection{\new{$\t{SteerConstraint}(q_\t{near}, M_i, M_{i+1})$}}
\label{sec:steer_constraint}
This steering step extends the tree from $q_\t{near}\in C_{M_i}$ towards the intersection of the current and next manifold $M_i\cap M_{i+1}$, which can be expressed as the optimization problem
\begin{align}
\begin{alignedat}{2}
	&\minimize{\gg} &&||h_{M_{i+1}}(\gg(1))||^2 \\
	&\subjectto\quad &&\gg(0) = q_\t{near},\quad \int_{0}^1 ||\dot{\gg}(t)|| ~\mathrm{d}s \leq \ga\\ 
	&&&h_{M_i}(\gg(s)) = 0 \quad\qquad \forall~ s\in [0,1]
	\end{alignedat}
\end{align}
The difference from problem \eqref{eq:steer_point} is that the loss is now specified in terms of the distance to the next manifold $h_{M_{i+1}}(\gg(1))$. This cost pulls the robot towards the manifold intersection. Again, we approximate the curve with a line $\gamma(s)$ and apply a first-order Taylor expansion to the nonlinear terms, which results in the simplified problem
\begin{align}
\begin{alignedat}{2}
	&\minimize{d} &&\frac{1}{2}||h_{M_{i+1}}(q_\t{near}) + J_{M_{i+1}}(q_\t{near})d||^2\\
	&\subjectto~~&&J_{M_i}(q_\t{near}) d = 0\period
	\end{alignedat}
\end{align}
A solution $d$ can be obtained by solving the linear system
\begin{align}
    \label{eq:steer_constraint_sol}
    \mat{c c}{J_{M_{i+1}}\T J_{M_{i+1}} & J_{M_{i+1}}\T \\ J_{M_{i}} & 0} \mat{c}{d\\ \gl} = \mat{c}{-J_{M_{i+1}}\T h_{M_{i+1}}\\ 0}
\end{align}
where $\gl$ are the Lagrange variables. The solution is in the same direction as the steepest descent direction of the loss projected onto the tangent space of $M_i$.
Similar to the goal bias in RRT, a parameter $\gb\in[0,1]$ specifies the probability of selecting the SteerConstraint step rather than SteerPoint.

After the steering strategy is determined and $d$ is computed, $q_\t{new}$ is projected either on $M_i$ or on the intersection manifold $M_i \cap M_{i+1}$ depending on the distance to the intersection of the manifolds measured by $||h_{M_{i+1}} (q_\t{new})||$ (step \ref{alg:project_start} of \alg \ref{alg:steer_project}).
The threshold for this condition is sampled uniformly between $0$ and $r$, where the parameter $r\in \R_{>0}$ describes the closeness required by a point around $M_{i+1}$ to be projected onto $M_i \cap M_{i+1}$. 
This randomization is necessary in order to achieve probabilistic completeness that is discussed in \sect \ref{ssec:comp-optimal}, which also gives a formal definition of $r$. 

At this point, $q_\t{new}$ has been produced and the algorithm attempts to add it to the tree using the extend routine from RRT${}^*$ \cite{karaman2011sampling}, which we outline in Appendix \ref{sec:rrt_extend}.

\new{The final step in each inner iteration of \alg \ref{alg:smp} is to check two necessary conditions to determine if the point should also be added to the set of intersection nodes $V_\t{goal}$: 
\mbox{1) The} point $q_\t{new}$ has to be on the next manifold $h_{M_{i+1}}$ (step \ref{alg:next_manifold}); 
2) $V_\t{goal}$ does not already contain the point $q_\t{new}$ or a point in its vicinity (step \ref{alg:avoid_duplicates}).
For the second condition, we introduce the parameter $\gr \in \R_{\geq 0}$ that is the minimum distance between two intersection points. As we will discuss in the theoretical analysis, $\gr$ must be $0$ in order to achieve probabilistic completeness and asymptotic optimality. In practice, we usually select $\gr > 0$, which results in a better performance since it avoids having a large number of nearby and duplicate intersection points.}

\subsection{Outer Loop: Initializing the Next Subproblem}
\label{ssec:phase3}
After the inner loop of \smp~completes, a new tree is initialized in steps \ref{alg:newtreeinit_start}--\ref{alg:newtreeinit_end} with all the intersection nodes in $V_\t{goal}$ and their costs so far. To keep the tree structure, we add a synthetic root node $q_\t{root}$ as parent node for all intersection nodes (\fig \ref{fig:tree_init}). In line \ref{alg:next_freespace}, the free configuration space is updated based on the reached intersection node. Convergence of the algorithm can be defined in various ways. We typically set an upper limit to the number of nodes or provide a time limit for the inner loop. After reaching the convergence criteria, the algorithm returns the path with the lowest cost that reached the goal manifold $M_{n+1}$.

\begin{table}[t]
\centering
\renewcommand{\tabcolsep}{4pt}
    \begin{tabular}{|c|c|c|c|}
        \hline
        \textbf{Parameter name} & \textbf{Symbol} & \specialcell{\textbf{3D point}\\ (Sec. \ref{ssec:geom_exp})} & \specialcell{\textbf{Robot transport}\\ (Sec. \ref{ssec:transport_exp})}\\
        \hline
        max step size & $\alpha$ & $1.0$ & $1.0$\\
        goal bias probability & $\beta$ & $0.1$ & $0.3$\\
        constraint threshold & $\epsilon$ & $0.01$ & \unit[1e-5]\\
        duplicate threshold & $\rho$ & $0.1$ & $0.5$\\
        projection distance & $r$ & $1.5$ & $0.5$\\
        number of samples & $m$ & $1200$ & $2000$\\
        \hline
    \end{tabular}
    \caption{Parameters of the \textit{3D Point on Geometric Constraints} and \textit{Multi-Robot Object Transport} experiment.}
    \label{tab:exp_parameter}
    \vspace{-0.12in}
\end{table}

\subsection{Completeness and Optimality}
\label{ssec:comp-optimal}
Under the assumption that $\rho$ is 0, \smp~is provably probabilistically complete and asymptotically optimal. Here, we outline our approach to proving these properties of \smp~(detailed proofs can be found in the appendix). Probabilistic completeness is proved in two steps. First we prove probabilistic completeness of \smp~on a single manifold. We claim subsequently that almost surely the tree grown on a manifold can be expanded onto the next manifold as the number of samples tends to infinity. The calculations related to the first step are based on \cite[Theorem 1]{Kleinbort2019}. Theorem 1 in Appendix \ref{sec:Probabilistic completeness} proves the subsequent claim. 

Proving the asymptotic optimality of \smp~relies on three lemmas (Lemma 2, 3 and 4 in Appendix \ref{sec:Asymptotic optimality}). Lemma 2 shows that for any weak clearance path \cite{karaman2011sampling} there exist a sequence of paths with strong clearance \cite{karaman2011sampling} that converges to a path with weak clearance. Our analysis assumes that the optimal path has weak clearance. Hence, there exist a sequence of strong clearance paths that converges to the optimal path.  Using Lemma 3 and Lemma 4, we prove that with an appropriate choice of the parameter $\gr$, the \smp~tree grown on the sequenced manifolds contains the paths which are arbitrarily close to any strong clearance path in the sequence of paths that converges to the optimal path. Finally using continuity of the cost function, we prove that \smp~is asymptotically optimal.

\section{Evaluation}
\label{sec:experiments}
In the following experiments, we solve kinematic motion planning problems where the cost function measures path length. We compare the following methods with each other:

\begin{itemize}[leftmargin=*]
    \item \textbf{\smp}: The method proposed here -- \alg \ref{alg:smp}.
    \item \textbf{\smp~(Greedy)}: Algorithm \ref{alg:smp} with the modification that only the node with the lowest cost in $V_\t{goal}$ is selected to initialize the next tree (steps \ref{alg:newtreeinit_start} -- \ref{alg:newtreeinit_end} of \alg \ref{alg:smp}).
    \item \textbf{\smp~(Single Tree)}: This method grows a single tree over the manifold sequence without splitting it into individual subtrees. The algorithm uses $\rho=0$. It is explained in detail in Algorithm 3 in Appendix \ref{sec:smp_single_tree}.
    \item {\textbf{RRT$^*$+IK}}: This method consists of a two-step procedure that is applied to every manifold in the sequence. First, a goal point on the manifold intersection is generated via inverse kinematics by randomly sampling a point in $C$ and projecting it onto the manifold intersection. Next, RRT$^*$ is applied to compute a path on the current manifold towards this node \cite{kingston2019ijrr}. This procedure is repeated until a point on the goal manifold is reached.
    \item {\textbf{Random MMP}}: The Randomized Multi-Modal Motion Planning algorithm \cite{hauser2011randomized} was originally developed for the more general class of motion planning problems where the sequence of manifolds is unknown. Here, we apply it to the simpler problem where the sequence is given in advance and use RRT$^*$ to connect points between mode transitions.
    \item {\textbf{CBiRRT}: The Constrained Bidirectional Rapidly-Exploring Random Tree algorithm \cite{berenson2009manipulation, berenson2011task} grows two trees towards each other with the RRT Connect strategy \cite{kuffner2000rrt}. In each steering step, the sampled configuration is projected onto the manifold with the lowest level set function value. Since CBiRRT does not optimize an objective function, we apply a short cutting algorithm as a post-processing step.}
\end{itemize}
We compare these methods on the following criteria:
\begin{itemize}[leftmargin=*]
    \item \textbf{Path length} -- The length of the found path in $C$ space.
    \item \textbf{Success rate} -- The number of times a collision-free path to the goal manifold is found for different random seeds.
    \item \textbf{Computation time} -- Time taken to compute a path. All experiments are run on a 2.2 GHz Quad-Core Intel Core i7.
\end{itemize}
\new{The parameter values $\alpha, \beta, \epsilon, \rho, r, m$ for the individual experiments were chosen experimentally and are summarized in Table \ref{tab:exp_parameter}.}
\subsection{3D Point on Geometric Constraints}
\label{ssec:geom_exp}
\new{We demonstrate the planning performance and properties of the individual methods on a simple point that can move in a 3D space.} The point needs to traverse three constraints defined by geometric primitives. The configuration space is limited to $[-6, 6]$ in all three dimensions.
The initial state is \mbox{$q_\t{start}=(3.5, 3.5, 4.45)$} and the sequence of manifolds is
\begin{enumerate}
	\item Paraboloid: $h_{M_1}(q) = 0.1 q_1^2 + 0.1 q_2^2 + 2 - q_3$
	\item Cylinder: $h_{M_2}(q) = 0.25 q_1^2 + 0.25 q_2^2 - 1.0$
	\item Paraboloid: $h_{M_3}(q) = -0.1 q_1^2 - 0.1 q_2^2 - 2 - q_3$
	\item Goal point:  $h_{M_4}(q) = q - q_\t{goal}$ with the goal configuration \mbox{$q_\t{goal}=(-3.5, -3.5, -4.45)$}.
\end{enumerate}
We evaluate the algorithms on two variants of this problem: an obstacle-free variant (\fig \ref{fig:hourglass}), and a variant that contains four box obstacles placed at the intersections between the manifolds (\fig \ref{fig:hourglass_paths}). In \fig \ref{fig:hourglass}, $q_\t{start}$ is drawn as red point and $q_\t{goal}$ as blue point. The intersection nodes $V_\t{goal}$ are shown as magenta points and a solution path from \smp~is visualized as a line. 

\begin{figure}[t]
	\centering
	\includegraphics[width=.45\textwidth]{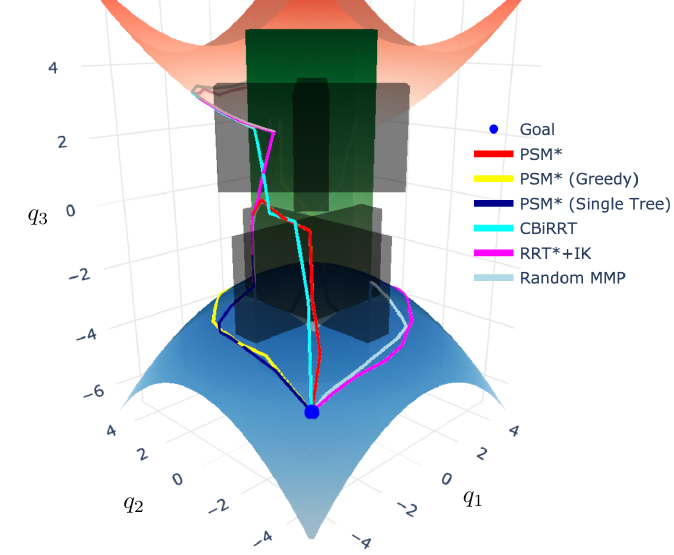}
	\caption{Samples of found paths on the \textit{3D point on geometric constraints w/ obstacles} problem (\sect \ref{ssec:geom_exp}).}
	\label{fig:hourglass_paths}
	\vspace{-0.25in}
\end{figure}

\new{
The results on success rate, path length and computation time are given in Table \ref{tab:comparison_results}. All the methods are consistently able to find a path for all $10$ random seeds. \smp~and \smp~(Single Tree) consistently achieve the lowest cost in both scenarios. \smp~(Single Tree) has a higher computation time, because it keeps all nodes in a single tree whereas \smp~splits them into individual trees per manifold. CBiRRT does not optimize an objective function and immediately converges when it finds a feasible path, which achieves the overall lowest computation time. However, the mean cost and standard deviations are higher compared to \smp. RRT${}^*$+IK and Random MMP only optimize over the individual paths, but do not optimize the intersection point selection, which results in higher costs and standard deviations. 
\smp~(Greedy) is better in terms of computation time since it only takes one intersection point as initial point for the next tree, but achieves an overall lower performance compared to \smp. 
The results also show that only \smp~and \smp~(Single Tree) are consistently able to find the intersection regions between the obstacles in which the optimal path lies. The other methods mainly select the intersection regions that are discovered first in the exploration. \fig \ref{fig:hourglass_paths} shows a set of found paths on the variant with obstacles.}

\new{In \fig \ref{fig:hourglass_rho_cost}, the path costs $J(\tau)$ of \smp~are compared for various values of $\gr$. This parameter specifies the minimum distance between two intersection points, which influences the number of intersection points created during planning (step \ref{alg:avoid_duplicates} in \alg \ref{alg:smp}). As a reference, we visualize the path costs of \smp~(Greedy) and \smp~(Single Tree). The results show that the computed paths of \smp~with a small $\gr$ value are very similar to the ones of \smp~(Single Tree) whereas for larger $\gr$ values, \smp~converges to the same performance as \smp~(Greedy) since only a single intersection point is considered. Therefore, $\gr$ can be seen as a trade-off between planning faster and achieving lower path costs. \fig \ref{fig:hourglass_samples_cost} compares the path costs for different samples $m$. All methods improve with an increasing amount of samples and \smp~and \smp~(Single Tree) converge to similar cost values.}

\begin{table}[t]
\centering
\renewcommand{\tabcolsep}{4pt}
    \begin{tabular}{|l|c|c|c|}
\hline
                                 & \textbf{Success} & \textbf{Path length} & \textbf{Comp. time [s]} \\
 \hline
 \textbf{3D point w/o obstacles} &                  &                      &                     \\
 PSM${}^*$                       & $10 / 10$        & $\mathbf{14.47\pm 0.04}$      & $10.64\pm 0.16$     \\
 PSM${}^*$ (Single Tree)         & $10 / 10$        & $14.47\pm 0.05$      & $13.72\pm 0.18$     \\
 PSM${}^*$ (Greedy)              & $10 / 10$        & $16.20\pm 0.05$      & $10.36\pm 0.10$     \\
 RRT${}^*$+IK                    & $10 / 10$        & $17.84\pm 2.23$      & $28.35\pm 13.50$    \\
 Random MMP                      & $10 / 10$        & $17.33\pm 1.28$      & $34.75\pm 17.21$    \\
 CBiRRT                          & $10 / 10$        & $14.70\pm 0.71$      & $\mathbf{5.04\pm 0.30}$      \\
 \hline
 \textbf{3D point w/ obstacles}  &                  &                      &                     \\
 PSM${}^*$                       & $10 / 10$        & $15.95\pm 0.13$      & $13.74\pm 0.51$     \\
 PSM${}^*$ (Single Tree)         & $10 / 10$        & $\mathbf{15.87\pm 0.18}$      & $20.42\pm 0.86$     \\
 PSM${}^*$ (Greedy)              & $10 / 10$        & $19.69\pm 0.27$      & $12.89\pm 0.51$      \\
 RRT${}^*$+IK                    & $10 / 10$        & $21.56\pm 3.05$      & $30.21\pm 7.55$     \\
 Random MMP                      & $10 / 10$        & $22.15\pm 2.20$      & $42.09\pm 17.22$     \\
 CBiRRT                          & $10 / 10$        & $16.66\pm 1.34$      & $\mathbf{3.34\pm 0.25}$      \\
 \hline
 \textbf{Robot transport A}      &                  &                      &                     \\
 PSM${}^*$                       & $10 / 10$        & $\mathbf{7.76\pm 0.84}$       & $7.22\pm 0.81$      \\
 PSM${}^*$ (Single Tree)         & $9 / 10$         & $8.53\pm 1.19$       & $11.56\pm 0.41$     \\
 PSM${}^*$ (Greedy)              & $10 / 10$        & $8.18\pm 0.91$       & $\mathbf{7.03\pm 0.09}$      \\
 RRT${}^*$+IK                    & $10 / 10$        & $11.83\pm 3.23$      & $74.65\pm 28.86$    \\
 Random MMP                      & $10 / 10$        & $11.62\pm 2.36$      & $85.45\pm 25.66$    \\
 \hline
 \textbf{Robot transport B}      &                  &                      &                     \\
 PSM${}^*$                       & $10 / 10$        & $\mathbf{14.73\pm 1.27}$      & $\mathbf{14.19\pm 0.76}$     \\
 PSM${}^*$ (Single Tree)         & $0 / 10$         & --                   & --                  \\
 PSM${}^*$ (Greedy)              & $9 / 10$         & $15.41\pm 2.38$      & $14.75\pm 0.53$     \\
 RRT${}^*$+IK                    & $2 / 10$         & $45.14\pm 0.58$      & $89.84\pm 11.00$    \\
 Random MMP                      & $10 / 10$        & $16.96\pm 4.59$      & $163.26\pm 37.80$   \\
 \hline
 \textbf{Robot transport C}      &                  &                      &                     \\
 PSM${}^*$                       & $10 / 10$         & $\mathbf{27.07 \pm 2.58}$      & $\mathbf{275.83 \pm 19.73}$   \\
 PSM${}^*$ (Single Tree)         & $0 / 10$         & --                   & --                  \\
 PSM${}^*$ (Greedy)              & $10 / 10$         & $31.75 \pm 2.51$                   & $311.81\pm 9.79$                  \\
 RRT${}^*$+IK                    & $0 / 10$         & --                   & --                  \\
 Random MMP              & $0 / 10$         & --                   & --                  \\
 \hline
\end{tabular}
\caption{Results of the 3D point and robot transport problems. We report the mean and one unit standard deviation over $10$ runs with different random seeds.}
    \label{tab:comparison_results}
    \vspace{-0.1in}
\end{table}

\subsection{Multi-Robot Object Transport Tasks}
\label{ssec:transport_exp}
In this experiment, we evaluate \smp~on various object transportation tasks involving multiple robots. The overall objective is to transport an object from an initial to a goal location. We consider three variations of this task:
\begin{itemize}[leftmargin=*]
    \item \textbf{Task A}: A single robot arm mounted on a table with $k=6$ degrees of freedom. The task is to transport an object from an initial location on the table to a target location. This task is described by $n=3$ manifolds.
    \item \textbf{Task B}: This task consists of two robot arms and a mobile base consisting of $k=14$ degrees of freedom. The task is defined such that the first robot arm picks the object and places it on the mobile base. Then, the mobile base brings it to the second robot arm that picks it up and places it on the table. This procedure is described with $n=5$ manifolds.
    \item \textbf{Task C}: In this task, four robots are used to transport two objects between two tables. Three arms are mounted on the tables and another arm with a tray is mounted on a mobile base. Besides transporting the objects, the orientation of the two objects needs to be kept upright during the whole motion. This task is described with $n=12$ manifolds and the configuration space has $k=26$ degrees of freedom.
\end{itemize}
The initial states of the three tasks are visualized in \fig \ref{fig:robot_tasks} where the target locations of the objects are shown in green. The geometries of the objects were chosen such that the tasks are intersection point independent (\sect \ref{ssec:ipip}). Three types of constraints are used to describe the tasks. Picking up an object is defined with the constraint
$h_M(q) = x_g - f_{\t{pos}, e}(q)$
where $x_g\in\R^3$ is the location of the object and $f_{\t{pos}, e}(q)$ is the forward kinematics function to a point $e\in\R^3$ on the robot end effector. The handover of an object between two robots is described by
$h_M(q) = f_{\t{pos}, e_1}(q) - f_{\t{pos}, e_2}(q)$
where $f_{\t{pos}, e_1}(q)$ is the forward kinematics function to the end effector of the first robot and $f_{\t{pos}, e_2}(q)$ is that of the second robot.
The orientation constraint is given by an alignment constraint
$h_M(q) = f_{\t{rot}, z}(q) \T e_z - 1$
where $f_{\t{rot}, z}(q)$ is a unit vector attached to the robot end effector that should be aligned with the vector $e_z=(0,0,1)$ to point upwards.
These constraints, \new{when concatenated in different arrangements}, are sufficient to describe the multi-robot transportation tasks. The parameters of the algorithms are summarized in Table \ref{tab:exp_parameter}.

The costs of the algorithms are reported in Table \ref{tab:comparison_results}. 
Note that we only assume access to a forward simulator of the object manipulation operations. CBiRRT could not be applied to this task since it requires a goal configuration at which the backward tree originates. 
This would require a backward simulator and would result in an unfair comparison, since the information that CBiRRT would use during planning would be different from what the other algorithms use.
On Task A, nearly all methods robustly find solutions. However, when task complexity increases, RRT$^*$+IK, \smp~(Single Tree), and Random MMP have difficulties to solve the problem. Only \smp~and \smp~(Greedy) were able to find solutions of task C with the given parameters. The cost and computation time of \smp~is lower compared to \smp~(Greedy). A found solution of \smp~for Task C is visualized in \fig \ref{fig:robot_task_c}.

\subsection{Pick-and-Pour on Panda Robot}
\label{ssec:robot_exp}
\new{In this experiment, we demonstrate \smp~on a real Panda robot arm with seven degrees of freedom. The task description consists of $n=7$ manifolds, which describe the individual grasp, transport, and pouring motions. We assume the position of the objects in the scene are known. The parameters are the same as in the robot transport experiments (Table \ref{tab:exp_parameter}) and the planning time was \unit[131.65]{s}. \fig \ref{fig:panda_pour} shows a found path executed on the real robot.} The resulting motions of the task are shown in the supplementary video.

\begin{figure*}
	\centering
	\subfloat[\label{fig:hourglass_rho_cost}]{\includegraphics[width=.5\textwidth]{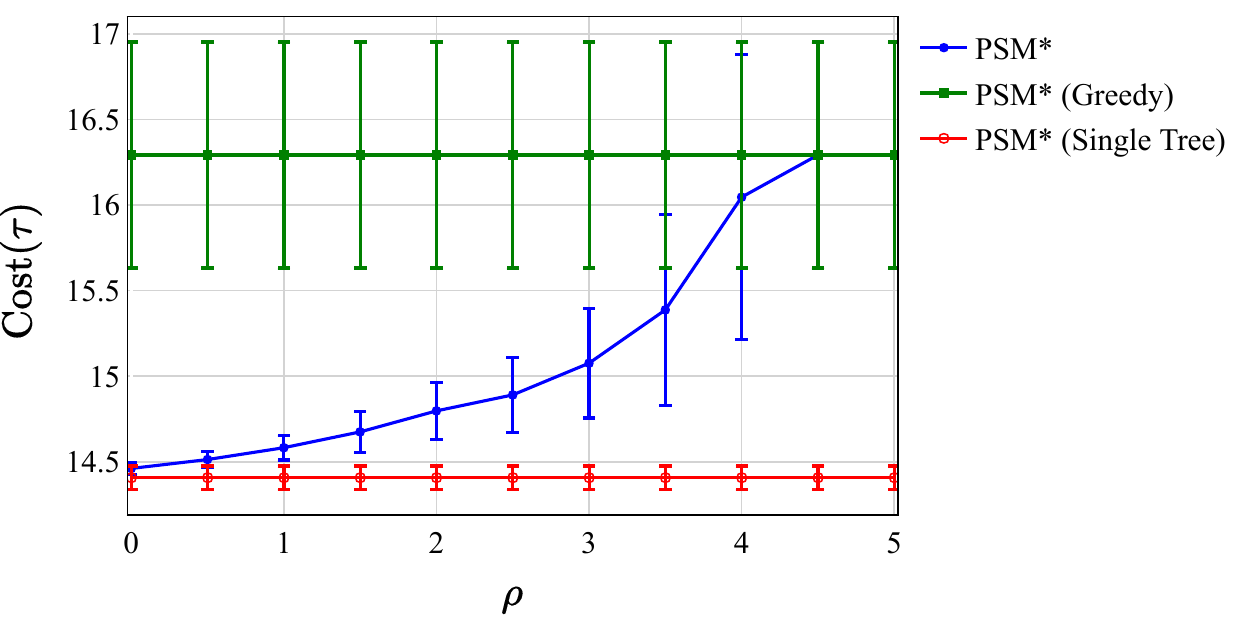}}
	\subfloat[\label{fig:hourglass_samples_cost}]{\includegraphics[width=.5\textwidth]{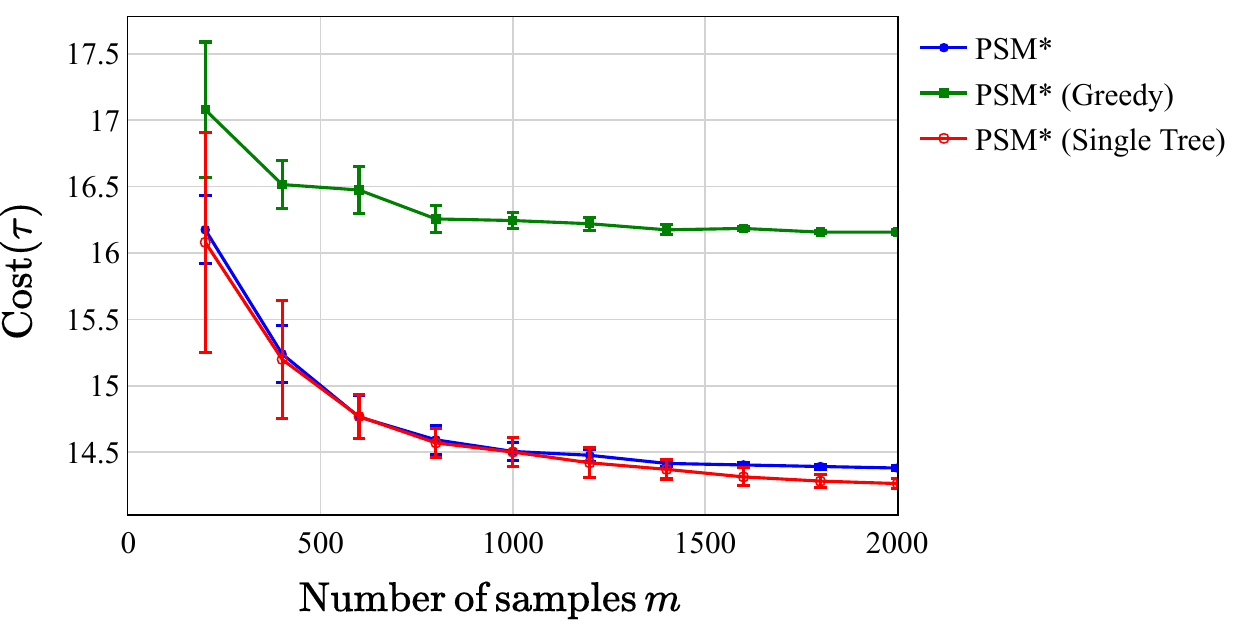}}
	
	\caption{Path costs over variations of the parameter $\gr$ (left) and the number of samples $m$ (right) on the \emph{geometric constraints w/o obstacles} problem (\sect \ref{ssec:geom_exp}). The graphs show the mean and unit standard deviation over $10$ trials. Figure (a) shows the costs increase for higher values of $\gr$, meaning fewer intersection points are considered during planning. 
	In (b), the performance of all methods improves with larger $m$ values where \smp~and \smp~(Single Tree) converge to similar path costs.}
	\label{fig:hourglass_eval}
\end{figure*}

\begin{figure*}
    \centering
    \subfloat[Task A]{\includegraphics[height=0.205\textwidth]{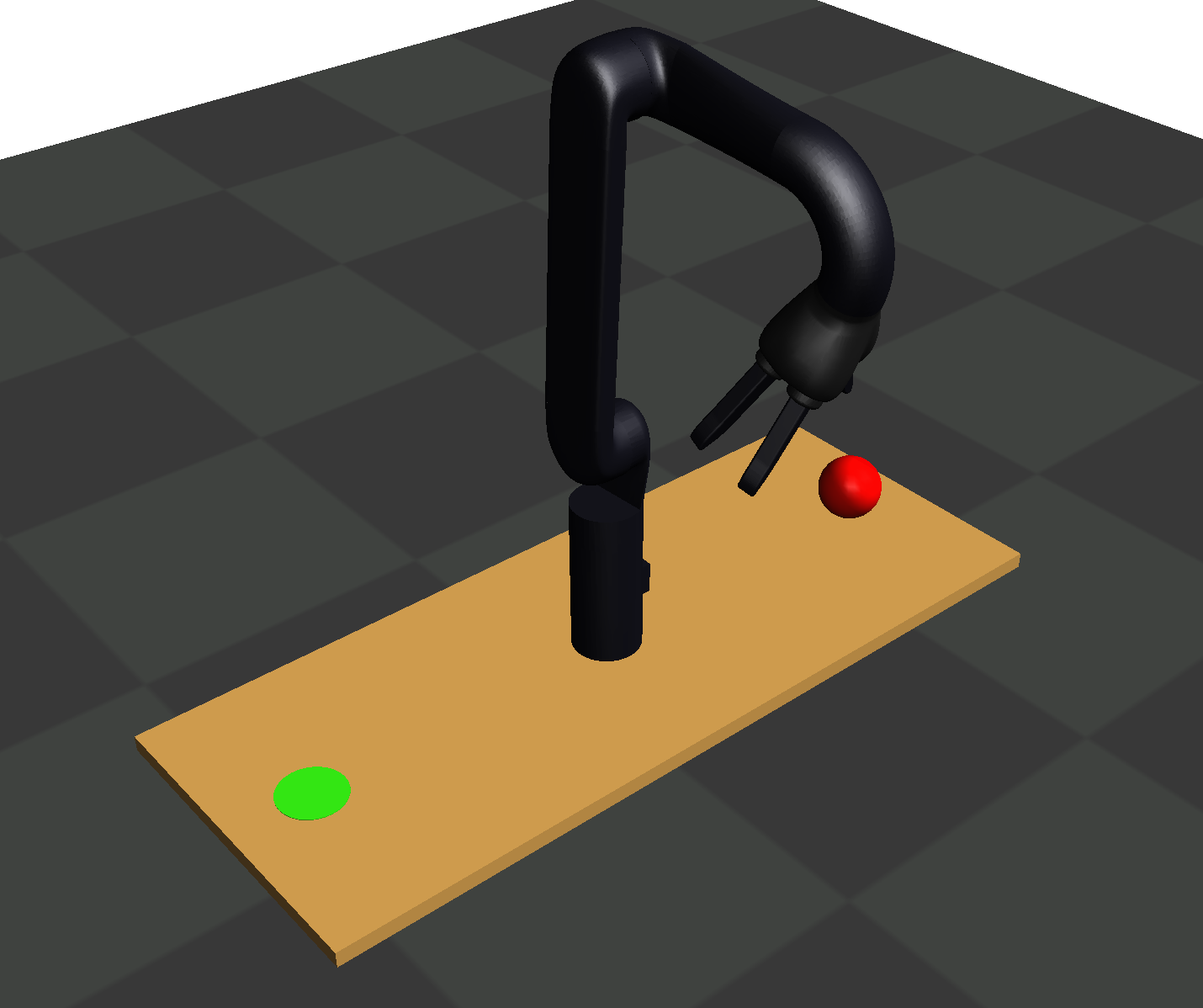}}
    \hspace{0.0in}
    \subfloat[Task B]{\includegraphics[height=0.205\textwidth]{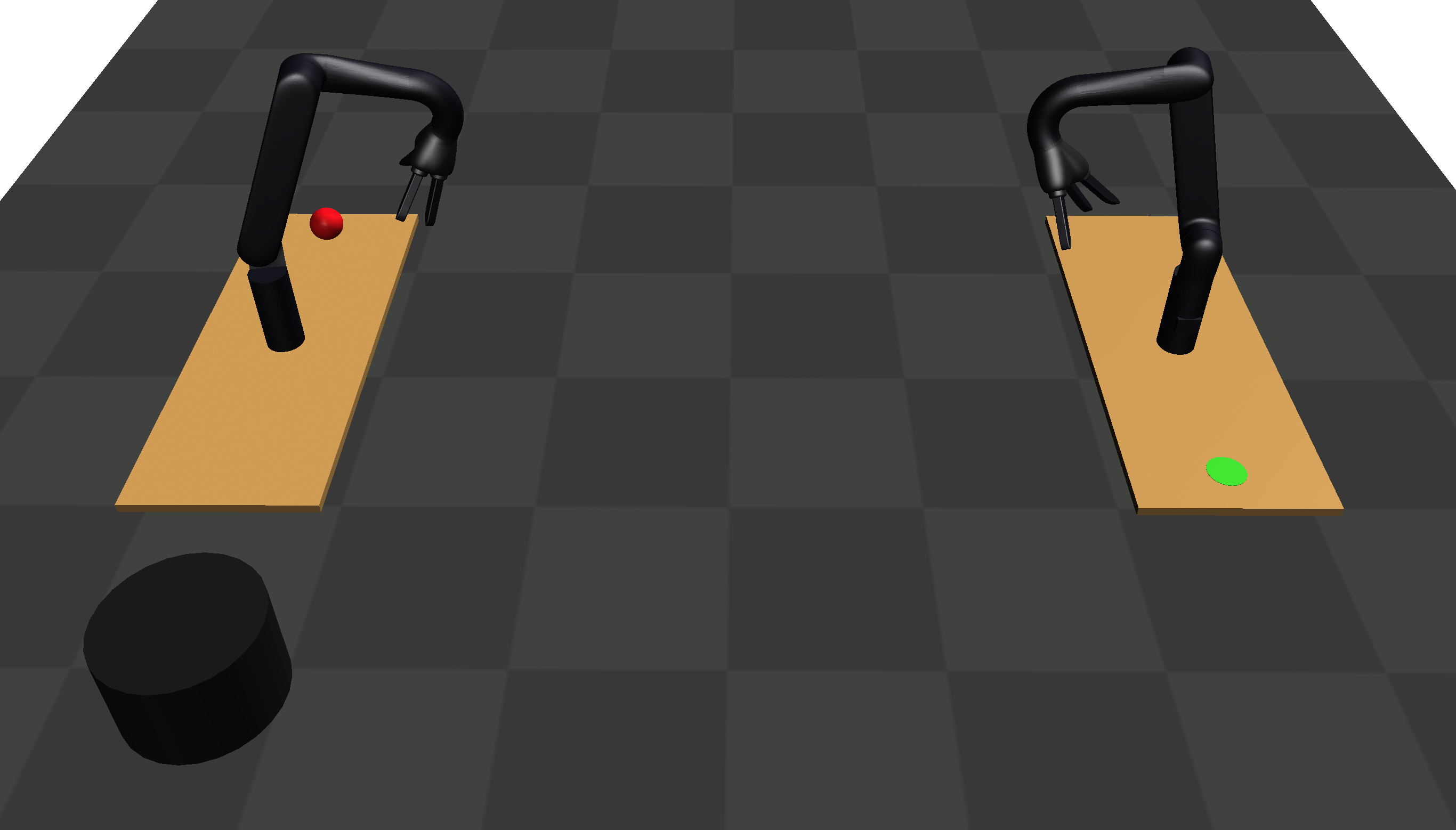}}
    \hspace{0.0in}
    \subfloat[Task C]{\includegraphics[height=0.205\textwidth]{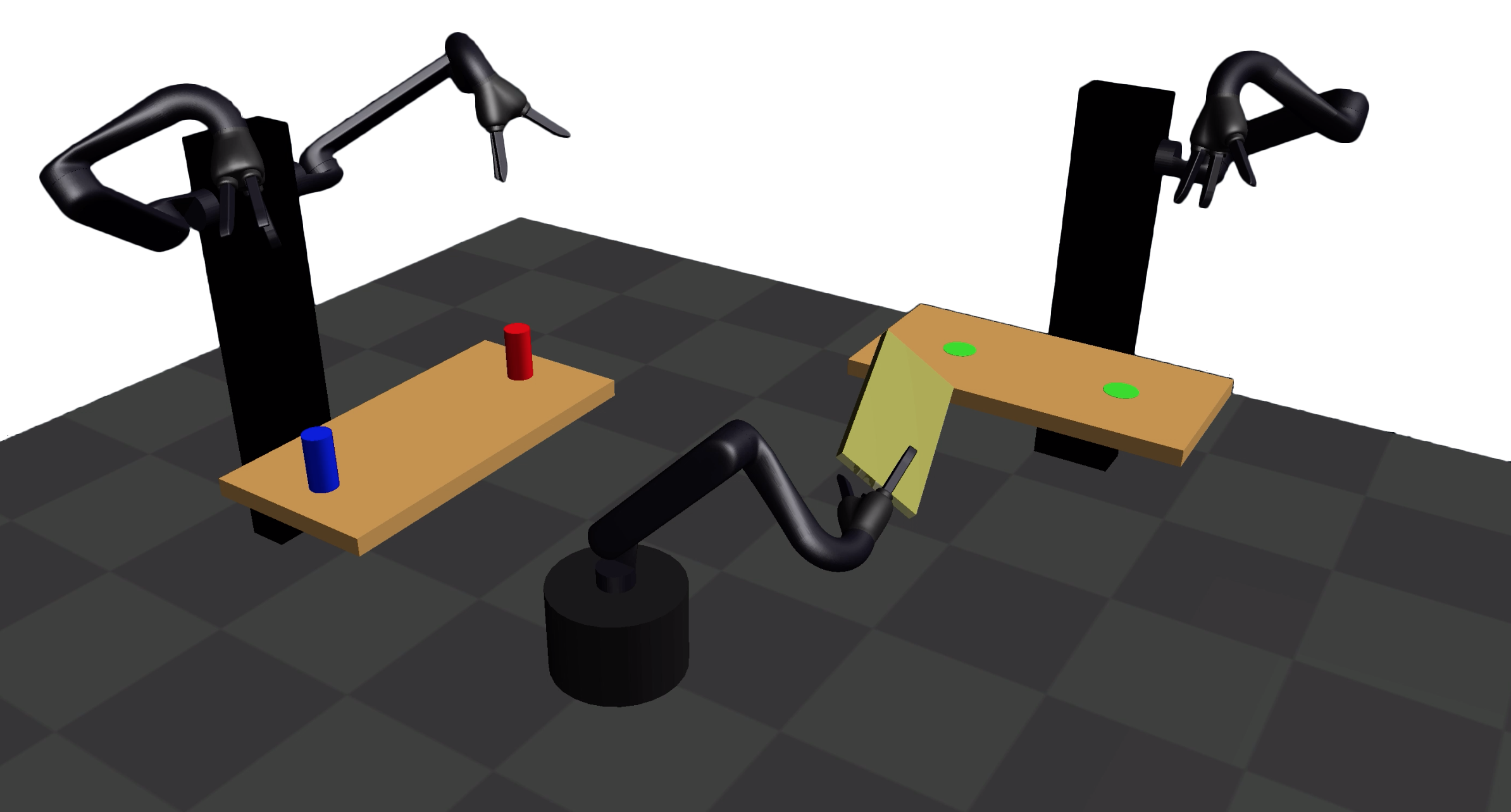}}
    \caption{Start states of the object transport tasks where the goal is to place the red and blue object to the green target locations.}
    \label{fig:robot_tasks}
\end{figure*}

\begin{figure*}
    \centering
    \includegraphics[width=1.0\textwidth]{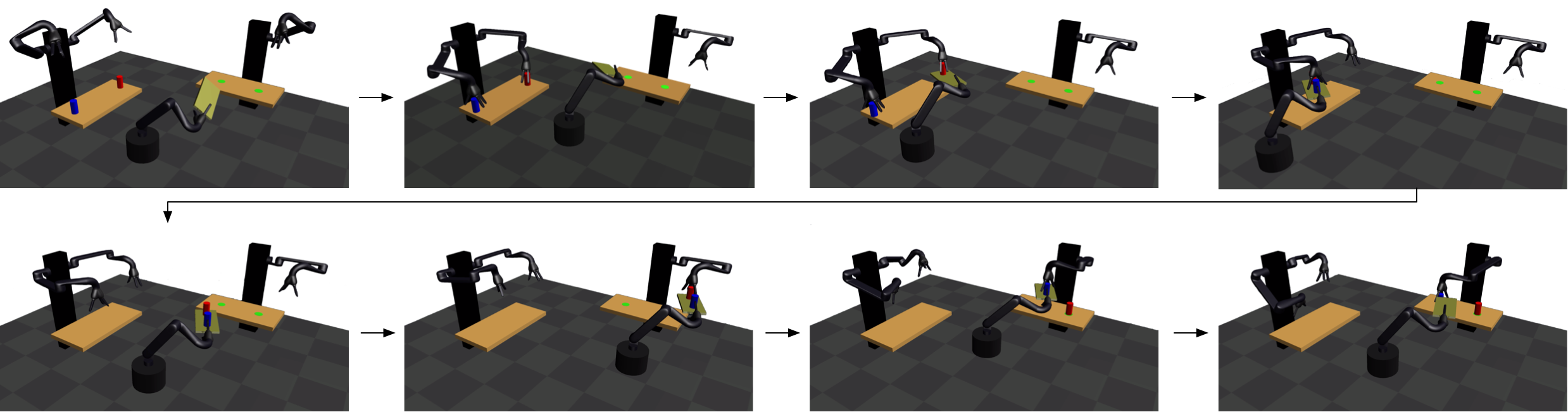}
\caption{\new{Snapshots of the resulting motion that \smp~found for Task C.}}
    \label{fig:robot_task_c}
\end{figure*}
\begin{figure*}
    \centering
    \includegraphics[width=\textwidth]{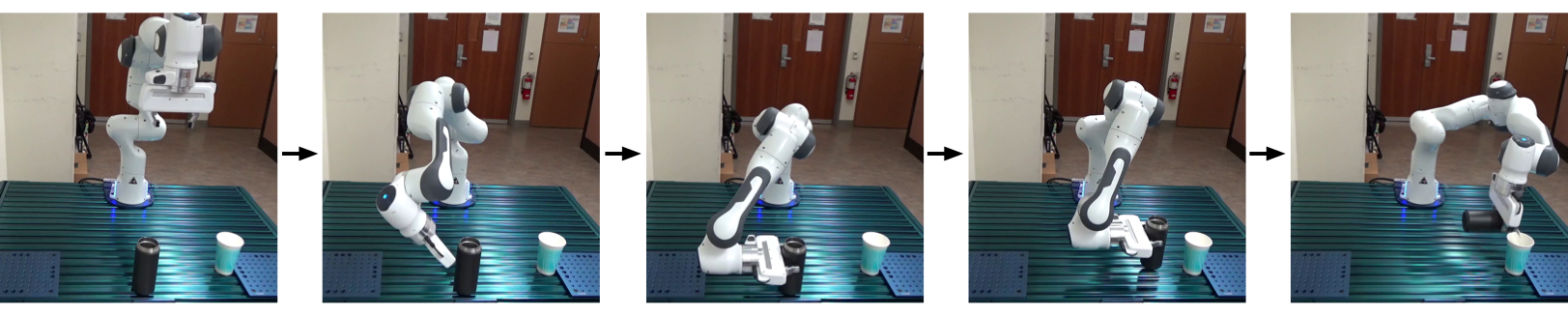}
    \caption{Snapshots of the pouring task motion.}
    \label{fig:panda_pour}
\end{figure*}

\section{Discussion}
\label{sec:conclusion}
We proposed the algorithm \smp~to solve sequential motion planning 
posed as a constrained optimization problem, where the goal is to find a collision-free path that minimizes a cost function and fulfills a given sequence of manifold constraints.
The algorithm is applicable to a certain problem class that is specified by the intersection point independent property, which says that the change in free configuration space is independent of the selected intersection point between two manifolds. 
\smp~uses RRT$^*$'s extend method with a novel steering strategy that is able to discover intersection points between manifolds.
We proved that \smp~is probabilistically complete and asymptotically optimal and demonstrated it on multi-robot transportation tasks.
 
Restricting the problem class to intersection point independent planning problems allowed us to develop efficient solution strategies like growing a single tree over a sequence of manifolds. An interesting question for future research is how to extend the strengths of \smp~to a larger problem class; specifically, how \smp~can be used for problems that do not fulfill the intersection point independent property. For such problems, the choice of intersection points influences the future parts of the planning problem, which results in a more complex problem. An interesting aspect that future work can address is the effect of an intersection point on subsequent manifolds, how solutions for one intersection point can be reused and transferred to other intersection points without replanning from scratch, \new{or investigating a backtracking approach when the tree cannot be further expanded}.
Further, we plan to investigate the reduction of such problems to the simpler intersection point independent problem class, for example, by morphing object geometries into simpler shapes such that the intersection point independent property is satisfied. \smp~could be applied to the reduced problem and provide a good initial guess for solving the original problem.

A future direction we already have begun exploring is the integration of learning techniques into the motion planner \cite{sutanto2020learning}. Here we assumed the user has the knowledge and capability to write the constraint manifolds for the entire task sequence. However, this may not always be the case, or it may be more convenient to give demonstrations of a task such that the constraints can be automatically extracted for future planning. To this end, we are exploring neural network models that are able to learn manifold constraints from robot demonstrations. We train these networks to represent level-set functions of the constraint by aligning subspaces in the network with subspaces of the data. We integrated such learned manifolds together with analytically specified manifolds into a planning problem that was solved with \smp.

\section*{Acknowledgements}
This material is based upon work supported by the National Science Foundation Graduate Research Fellowship Program under Grant No. DGE-1842487. Any opinions, findings, and conclusions or recommendations expressed in this material are those of the author(s) and do not necessarily reflect the views of the National Science Foundation.
This work was supported in part by the Office of Naval Research (ONR) under grant N000141512550.

\bibliographystyle{plainnat}
\bibliography{references}

\begin{thebibliography}{51}
\providecommand{\natexlab}[1]{#1}
\providecommand{\url}[1]{\texttt{#1}}
\expandafter\ifx\csname urlstyle\endcsname\relax
  \providecommand{\doi}[1]{doi: #1}\else
  \providecommand{\doi}{doi: \begingroup \urlstyle{rm}\Url}\fi

\bibitem[Aamari et~al.(2019)Aamari, Kim, Chazal, Michel, Rinaldo, Wasserman,
  et~al.]{aamari2019estimating}
Eddie Aamari, Jisu Kim, Fr{\'e}d{\'e}ric Chazal, Bertrand Michel, Alessandro
  Rinaldo, Larry Wasserman, et~al.
\newblock Estimating the reach of a manifold.
\newblock \emph{Electronic journal of statistics}, 13\penalty0 (1):\penalty0
  1359--1399, 2019.

\bibitem[Amato and Wu(1996)]{amato1996randomized}
Nancy~Marie Amato and Yan Wu.
\newblock A randomized roadmap method for path and manipulation planning.
\newblock In \emph{Proceedings of the International Conference on Robotics and
  Automation}, pages 113--120, 1996.

\bibitem[Barry et~al.(2013)Barry, Kaelbling, and
  Lozano-P{\'e}rez]{barry2013hierarchical}
Jennifer Barry, Leslie~Pack Kaelbling, and Tom{\'a}s Lozano-P{\'e}rez.
\newblock A hierarchical approach to manipulation with diverse actions.
\newblock In \emph{International Conference on Robotics and Automation}, pages
  1799--1806, 2013.

\bibitem[Berenson et~al.(2009)Berenson, Srinivasa, Ferguson, and
  Kuffner]{berenson2009manipulation}
Dmitry Berenson, Siddhartha~S Srinivasa, Dave Ferguson, and James~J Kuffner.
\newblock Manipulation planning on constraint manifolds.
\newblock In \emph{International Conference on Robotics and Automation}, pages
  625--632, 2009.

\bibitem[Berenson et~al.(2011)Berenson, Srinivasa, and
  Kuffner]{berenson2011task}
Dmitry Berenson, Siddhartha Srinivasa, and James Kuffner.
\newblock Task space regions: A framework for pose-constrained manipulation
  planning.
\newblock \emph{The International Journal of Robotics Research}, 30\penalty0
  (12):\penalty0 1435--1460, 2011.

\bibitem[Bonalli et~al.(2019)Bonalli, Cauligi, Bylard, Lew, and
  Pavone]{pavone2019rss}
Riccardo Bonalli, Abhishek Cauligi, Andrew Bylard, Thomas Lew, and Marco
  Pavone.
\newblock Trajectory optimization on manifolds: A theoretically-guaranteed
  embedded sequential convex programming approach.
\newblock In \emph{Proceedings of Robotics: Science and Systems}, 2019.

\bibitem[Boothby(1986)]{Boothby:107707}
William~Munger Boothby.
\newblock \emph{{An introduction to differentiable manifolds and Riemannian
  geometry; 2nd ed.}}
\newblock Pure Appl. Math. Academic Press, Orlando, FL, 1986.

\bibitem[Bordalba et~al.(2018)Bordalba, Ros, and Porta]{bordalba2018randomized}
Ricard Bordalba, Llu{\'\i}s Ros, and Josep~M Porta.
\newblock Randomized kinodynamic planning for constrained systems.
\newblock In \emph{Proceedings of the International Conference on Robotics and
  Automation}, pages 7079--7086, 2018.

\bibitem[Cambon et~al.(2009)Cambon, Alami, and Gravot]{cambon2009hybrid}
St{\'e}phane Cambon, Rachid Alami, and Fabien Gravot.
\newblock A hybrid approach to intricate motion, manipulation and task
  planning.
\newblock \emph{The International Journal of Robotics Research}, 28\penalty0
  (1):\penalty0 104--126, 2009.

\bibitem[Cort{\'e}s and Sim{\'e}on(2004)]{cortes2004sampling}
Juan Cort{\'e}s and Thierry Sim{\'e}on.
\newblock Sampling-based motion planning under kinematic loop-closure
  constraints.
\newblock In \emph{Algorithmic Foundations of Robotics VI}, pages 75--90.
  Springer, 2004.

\bibitem[Dantam et~al.(2016)Dantam, Kingston, Chaudhuri, and
  Kavraki]{dantam2016incremental}
Neil~T. Dantam, Zachary~K. Kingston, Swarat Chaudhuri, and Lydia~E. Kavraki.
\newblock Incremental task and motion planning: A constraint-based approach.
\newblock In \emph{Proceedings of Robotics: Science and Systems}, 2016.

\bibitem[Dantam et~al.(2018)Dantam, Chaudhuri, and Kavraki]{dantam2018task}
Neil~T Dantam, Swarat Chaudhuri, and Lydia~E Kavraki.
\newblock The task-motion kit: An open source, general-purpose task and
  motion-planning framework.
\newblock \emph{IEEE Robotics \& Automation Magazine}, 25\penalty0
  (3):\penalty0 61--70, 2018.

\bibitem[Hatcher et~al.(2002)Hatcher, Press, and
  of~Mathematics]{hatcher2002algebraic}
A.~Hatcher, Cambridge~University Press, and Cornell University.~Dept.
  of~Mathematics.
\newblock \emph{Algebraic Topology}.
\newblock Algebraic Topology. Cambridge University Press, 2002.

\bibitem[Hauser and Ng-Thow-Hing(2011)]{hauser2011randomized}
Kris Hauser and Victor Ng-Thow-Hing.
\newblock Randomized multi-modal motion planning for a humanoid robot
  manipulation task.
\newblock \emph{The International Journal of Robotics Research}, 30\penalty0
  (6):\penalty0 678--698, 2011.

\bibitem[Jaillet and Porta(2013{\natexlab{a}})]{jaillet2013asymptotically}
L{\'e}onard Jaillet and Josep~M Porta.
\newblock Asymptotically-optimal path planning on manifolds.
\newblock In \emph{Proceedings of Robotics: Science and Systems},
  2013{\natexlab{a}}.

\bibitem[Jaillet and Porta(2013{\natexlab{b}})]{jaillet2013path}
L{\'e}onard Jaillet and Josep~M Porta.
\newblock Path planning under kinematic constraints by rapidly exploring
  manifolds.
\newblock \emph{IEEE Transactions on Robotics}, 29\penalty0 (1):\penalty0
  105--117, 2013{\natexlab{b}}.

\bibitem[Kaelbling and Lozano-P{\'e}rez(2013)]{kaelbling2013integrated}
Leslie~Pack Kaelbling and Tom{\'a}s Lozano-P{\'e}rez.
\newblock Integrated task and motion planning in belief space.
\newblock \emph{The International Journal of Robotics Research}, 32\penalty0
  (9-10):\penalty0 1194--1227, 2013.

\bibitem[Kaiser et~al.(2012)Kaiser, Berenson, Vahrenkamp, Asfour, Dillmann, and
  Srinivasa]{kaiser2012constellation}
Peter Kaiser, Dmitry Berenson, Nikolaus Vahrenkamp, Tamim Asfour, R{\"u}diger
  Dillmann, and Siddhartha Srinivasa.
\newblock Constellation - {A}n algorithm for finding robot configurations that
  satisfy multiple constraints.
\newblock In \emph{Proceedings of the International Conference on Robotics and
  Automation}, pages 436--443, 2012.

\bibitem[Karaman and Frazzoli(2011)]{karaman2011sampling}
S.~Karaman and E.~Frazzoli.
\newblock Sampling-based algorithms for optimal motion planning.
\newblock \emph{International Journal of Robotics Research}, 30:\penalty0
  846--894, 2011.

\bibitem[Kavraki and Latombe(1994)]{kavraki1994randomized}
Lydia Kavraki and J-C. Latombe.
\newblock Randomized preprocessing of configuration for fast path planning.
\newblock In \emph{Proceedings of the International Conference on Robotics and
  Automation}, pages 2138--2145, 1994.

\bibitem[Kim et~al.(2016)Kim, Um, Suh, and Park]{kim2016tangent}
Beobkyoon Kim, Terry~Taewoong Um, Chansu Suh, and Frank~C Park.
\newblock Tangent bundle rrt: A randomized algorithm for constrained motion
  planning.
\newblock \emph{Robotica}, 34\penalty0 (1):\penalty0 202--225, 2016.

\bibitem[Kim et~al.(2014)Kim, Ko, and Park]{kim2014randomized}
Jinkyu Kim, Inyoung Ko, and Frank~C Park.
\newblock Randomized path planning on foliated configuration spaces.
\newblock In \emph{International Conference on Ubiquitous Robots and Ambient
  Intelligence}, pages 209--214, 2014.

\bibitem[Kingston et~al.(2018)Kingston, Moll, and
  Kavraki]{kingston2018sampling}
Zachary Kingston, Mark Moll, and Lydia~E Kavraki.
\newblock Sampling-based methods for motion planning with constraints.
\newblock \emph{Annual review of control, robotics, and autonomous systems},
  1:\penalty0 159--185, 2018.

\bibitem[Kingston et~al.(2019)Kingston, Moll, and Kavraki]{kingston2019ijrr}
Zachary Kingston, Mark Moll, and Lydia~E Kavraki.
\newblock Exploring implicit spaces for constrained sampling-based planning.
\newblock \emph{The International Journal of Robotics Research}, 38\penalty0
  (10-11):\penalty0 1151--1178, 2019.

\bibitem[Kingston et~al.(2020)Kingston, Wells, Moll, and
  Kavraki]{kingston2020informing}
Zachary Kingston, Andrew~M Wells, Mark Moll, and Lydia~E Kavraki.
\newblock Informing multi-modal planning with synergistic discrete leads.
\newblock In \emph{Proceedings of the International Conference on Robotics and
  Automation}, pages 3199--3205, 2020.

\bibitem[{Kleinbort} et~al.(2019){Kleinbort}, {Solovey}, {Littlefield},
  {Bekris}, and {Halperin}]{Kleinbort2019}
M.~{Kleinbort}, K.~{Solovey}, Z.~{Littlefield}, K.~E. {Bekris}, and
  D.~{Halperin}.
\newblock Probabilistic completeness of rrt for geometric and kinodynamic
  planning with forward propagation.
\newblock \emph{IEEE Robotics and Automation Letters}, 4\penalty0 (2), 2019.

\bibitem[Konidaris et~al.(2018)Konidaris, Kaelbling, and
  Lozano-Perez]{konidaris2018skills}
George Konidaris, Leslie~Pack Kaelbling, and Tomas Lozano-Perez.
\newblock From skills to symbols: Learning symbolic representations for
  abstract high-level planning.
\newblock \emph{Journal of Artificial Intelligence Research}, 61:\penalty0
  215--289, 2018.

\bibitem[Kuffner~Jr and LaValle(2000)]{kuffner2000rrt}
James~J Kuffner~Jr and Steven~M LaValle.
\newblock Rrt-connect: An efficient approach to single-query path planning.
\newblock In \emph{Proceedings of the International Conference on Robotics and
  Automation}, pages 995--1001, 2000.

\bibitem[LaValle(1998)]{lavalle1998rapidly}
Steven~M. LaValle.
\newblock Rapidly-exploring random trees: A new tool for path planning.
\newblock Technical Report TR 98-11, Computer Science Department, Iowa State
  University, 1998.

\bibitem[LaValle(2006)]{lavalle2006planning}
Steven~M. LaValle.
\newblock \emph{Planning Algorithms}.
\newblock Cambridge University Press, USA, 2006.

\bibitem[Lee(2006)]{lee2006riemannian}
John~M Lee.
\newblock \emph{Riemannian manifolds: an introduction to curvature}, volume
  176.
\newblock Springer Science \& Business Media, 2006.

\bibitem[Liberzon(2011)]{CoV_Liberzon}
Daniel Liberzon.
\newblock \emph{Calculus of Variations and Optimal Control Theory: A Concise
  Introduction}.
\newblock Princeton University Press, USA, 2011.
\newblock ISBN 0691151873.

\bibitem[Mirabel et~al.(2016)Mirabel, Tonneau, Fernbach, Sepp{\"a}l{\"a},
  Campana, Mansard, and Lamiraux]{mirabel2016hpp}
Joseph Mirabel, Steve Tonneau, Pierre Fernbach, Anna-Kaarina Sepp{\"a}l{\"a},
  Mylene Campana, Nicolas Mansard, and Florent Lamiraux.
\newblock Hpp: A new software for constrained motion planning.
\newblock In \emph{Proceedings of the International Conference on Intelligent
  Robots and Systems}, pages 383--389, 2016.

\bibitem[Overmars(1992)]{overmars1992random}
M.H. Overmars.
\newblock A random approach to motion planning.
\newblock Technical Report RUU-CS-92-32, Department of Computer Science,
  Utrecht University, 1992.

\bibitem[Pflueger and Sukhatme(2015)]{pflueger2015multi}
Max Pflueger and Gaurav~S Sukhatme.
\newblock Multi-step planning for robotic manipulation.
\newblock In \emph{Proceedings of the International Conference on Robotics and
  Automation}, pages 2496--2501, 2015.

\bibitem[Ratliff(2014)]{ratliff2014multivariate}
Nathan Ratliff.
\newblock Multivariate calculus {II}: The geometry of smooth maps.
\newblock \emph{Lecture notes: Mathematics for Intelligent Systems series},
  2014.

\bibitem[Ratliff et~al.(2015)Ratliff, Toussaint, and
  Schaal]{ratliff2015understanding}
Nathan Ratliff, Marc Toussaint, and Stefan Schaal.
\newblock Understanding the geometry of workspace obstacles in motion
  optimization.
\newblock In \emph{Proceedings of the International Conference on Robotics and
  Automation}, pages 4202--4209, 2015.

\bibitem[Schmitt et~al.(2019)Schmitt, Witnshofer, Wurm, Wichert, and
  Burgard]{schmitt2019modeling}
Philipp~S Schmitt, Florian Witnshofer, Kai~M Wurm, Georg~v Wichert, and Wolfram
  Burgard.
\newblock Modeling and planning manipulation in dynamic environments.
\newblock In \emph{International Conference on Robotics and Automation}, pages
  176--182, 2019.

\bibitem[Schulman et~al.(2013)Schulman, Ho, Lee, Awwal, Bradlow, and
  Abbeel]{schulman2013finding}
John Schulman, Jonathan Ho, Alex~X Lee, Ibrahim Awwal, Henry Bradlow, and
  Pieter Abbeel.
\newblock Finding locally optimal, collision-free trajectories with sequential
  convex optimization.
\newblock In \emph{Robotics: Science and Systems}, 2013.

\bibitem[Sim{\'e}on et~al.(2000)Sim{\'e}on, Laumond, and
  Nissoux]{simeon2000visibility}
Thierry Sim{\'e}on, J-P Laumond, and Carole Nissoux.
\newblock Visibility-based probabilistic roadmaps for motion planning.
\newblock \emph{Advanced Robotics}, 14\penalty0 (6):\penalty0 477--493, 2000.

\bibitem[Sim{\'e}on et~al.(2004)Sim{\'e}on, Laumond, Cort{\'e}s, and
  Sahbani]{simeon2004manipulation}
Thierry Sim{\'e}on, Jean-Paul Laumond, Juan Cort{\'e}s, and Anis Sahbani.
\newblock Manipulation planning with probabilistic roadmaps.
\newblock \emph{The International Journal of Robotics Research}, 23\penalty0
  (7-8):\penalty0 729--746, 2004.

\bibitem[Stein and Shakarchi(2005)]{Stein1385521}
Elias~M Stein and Rami Shakarchi.
\newblock \emph{{Real analysis: measure theory, integration, and Hilbert
  spaces}}.
\newblock Princeton lectures in analysis. Princeton Univ. Press, Princeton, NJ,
  2005.

\bibitem[Stilman(2007)]{stilman2007task}
Mike Stilman.
\newblock Task constrained motion planning in robot joint space.
\newblock In \emph{Proceedings of the International Conference on Intelligent
  Robots and Systems}, pages 3074--3081, 2007.

\bibitem[Stilman(2010)]{stilman2010global}
Mike Stilman.
\newblock Global manipulation planning in robot joint space with task
  constraints.
\newblock \emph{IEEE Transactions on Robotics}, 26\penalty0 (3):\penalty0
  576--584, 2010.

\bibitem[{\c{S}}ucan and Chitta(2012)]{csucan2012motion}
Ioan~A {\c{S}}ucan and Sachin Chitta.
\newblock Motion planning with constraints using configuration space
  approximations.
\newblock In \emph{Proceedings of the International Conference on Intelligent
  Robots and Systems}, pages 1904--1910, 2012.

\bibitem[Suh et~al.(2011)Suh, Um, Kim, Noh, Kim, and Park]{suh2011tangent}
Chansu Suh, Terry~Taewoong Um, Beobkyoon Kim, Hakjong Noh, Munsang Kim, and
  Frank~C Park.
\newblock Tangent space rrt: A randomized planning algorithm on constraint
  manifolds.
\newblock In \emph{Proceedings of the International Conference on Robotics and
  Automation}, pages 4968--4973, 2011.

\bibitem[Sutanto et~al.(2020)Sutanto, Rayas~Fern{\'a}ndez, Englert,
  Ramachandran, and Sukhatme]{sutanto2020learning}
Giovanni Sutanto, Isabel~M Rayas~Fern{\'a}ndez, Peter Englert, Ragesh~K
  Ramachandran, and Gaurav~S Sukhatme.
\newblock Learning equality constraints for motion planning on manifolds.
\newblock In \emph{Conference on Robot Learning}, 2020.

\bibitem[Toussaint(2017)]{toussaint2017newton}
Marc Toussaint.
\newblock A tutorial on {N}ewton methods for constrained trajectory
  optimization and relations to {SLAM}, {G}aussian {P}rocess smoothing, optimal
  control, and probabilistic inference.
\newblock In \emph{Geometric and Numerical Foundations of Movements}. Springer,
  2017.

\bibitem[Toussaint et~al.(2018)Toussaint, Allen, Smith, and
  Tenenbaum]{toussaint2018differentiable}
Marc Toussaint, Kelsey Allen, Kevin~A Smith, and Joshua~B Tenenbaum.
\newblock Differentiable physics and stable modes for tool-use and manipulation
  planning.
\newblock In \emph{Proceedings of Robotics: Science and Systems}, 2018.

\bibitem[Vega-Brown and Roy(2016)]{vega2016asymptotically}
William Vega-Brown and Nicholas Roy.
\newblock Asymptotically optimal planning under piecewise-analytic constraints.
\newblock In \emph{Workshop on the Algorithmic Foundations of Robotics}, pages
  528--543, 2016.

\bibitem[von Stryk and Bulirsch(1992)]{stryk1992direct}
Oskar von Stryk and Roland Bulirsch.
\newblock {Direct and Indirect Methods for Trajectory Optimization}.
\newblock \emph{Annals of Operations Research}, 37\penalty0 (1):\penalty0
  357--373, 1992.

\end{thebibliography}

\clearpage
\appendix
As a reference, we provide in Section \ref{sec:smp_single_tree} a variant of \smp~that grows a single tree over the manifold sequence without splitting it into individual subtrees, which we use as a baseline in the experiments. Afterwards, we summarize in Section \ref{sec:rrt_extend} the extension step of RRT${}^*$ \cite{karaman2011sampling} that is called in the inner loop of the \smp~algorithm. 
In Sections \ref{sec:Probabilistic completeness} and \ref{sec:Asymptotic optimality}, we provide a theoretical analysis regarding the probabilistic completeness and asymptotic optimality of the proposed algorithm.

\subsection{\smp~(Single Tree)}
\label{sec:smp_single_tree}

\alg \ref{alg:smp_single_tree} describes a variant of \smp~that grows a single tree over the manifold sequence without splitting it into individual subtrees. For each node, we store the corresponding current and target manifold as well as the free configuration space, which is extracted by the functions \textit{GetCurrentManifold}, \textit{GetTargetManifold}, and \textit{GetFreeSpace} in lines $5-7$. Given this information, we call the same steering function that \smp~uses and the RRT$^*$ extend routine described in \alg \ref{alg:rrt_extend}. The main difference between \smp~(Single Tree) algorithm and \smp~is that \smp~(Single Tree) grows the tree on all manifolds while \smp~grows a subtree for every manifold in $\mathcal{M}$ in a sequential manner. 

\subsection{RRT$^*$ Extension Step}
\label{sec:rrt_extend}
The RRT$^*$\_EXTEND routine (\alg \ref{alg:rrt_extend}) is a straightforward adaptation of the extension step in RRT$^*$ \cite{karaman2011sampling} to the \smp~problem.
\new{This routine checks the newly projected $q_\t{new}$ configuration for collision with the current free configuration space $C_{\t{free},i}$, performs rewiring steps, and eventually adds it as a node to the tree. 
The rewiring step is equivalent to the one in RRT$^*$ \cite{karaman2011sampling}, which checks if any nearby points can be reached with a shorter distance and updates its parent nodes accordingly.
It uses a distance function $c(q_0, q_1)\in \R_{\geq 0}$ between two nearby configurations and a function \emph{Cost}$(q)$ that stores the path costs from the root of the tree to a node $q$ in order to reconnect a new node $q$ to its neighbor that results in the shortest path from root $q_\t{start}$ to $q$.
Note that $c(q_0, q_1) \triangleq J(\overline{q_0q_1})$, where $\overline{q_0q_1}$ denotes the geodesic \cite{Boothby:107707} joining points $q_0$ and $q_1$ on the current manifold.
}

\begin{algorithm}[t]
\begin{algorithmic}[1]
	\algrenewcommand\algorithmicindent{1.5em}
	\State $V = \{q_{\t{start}}\}$; $E = \emptyset$; $n=\text{len}(\mathcal{M})-1$
        \For{$k = 1$ to $n m$}
			\State $q_\t{rand} \leftarrow \t{Sample}(C)$ \label{alg:st_steer_start}
			\State $q_\t{near} \leftarrow \t{Nearest}(V, q_\t{rand})$
			\State $M_i \leftarrow \t{GetCurrentManifold}(q_\t{near})$
			\State $M_{i+1} \leftarrow \t{GetTargetManifold}(q_\t{near})$
			\State{$q_\t{new} \leftarrow \t{\smp\_STEER} (\alpha, \beta, r, q_\t{near}, q_\t{rand}, M_i, M_{i+1}$)}
	\If{\new{$||h_{M_{i}} (q_\t{new})|| > \ge$}}
	    \State continue
	\EndIf
 		    \State RRT$^*$\_EXTEND$(V_i, E_i, q_\t{near}, q_\t{new},$ \new{$C_{\t{free},i})$}
		\EndFor 
		\State \textbf{return} $\mathrm{OptimalPath}(V, E, q_\t{start}, M_{n+1})$
\end{algorithmic}
\caption{\smp~(Single Tree) $(\mathcal{M}, q_{\t{start}}, \ga, \gb, \ge, r, m)$}
\label{alg:smp_single_tree}
\end{algorithm}
\begin{algorithm}[t]
\begin{algorithmic}[1]
	\algrenewcommand\algorithmicindent{1.0em}
			\If{$\t{CollisionFree}(q_\t{near}, q_\t{new}, $ \new{$C_{\t{free},i})$}}
				\State $Q_\t{near} = \t{Near}\Big(V, q_\t{new}, \min{} \Big\{\gg_\t{RRT*} \left(\tfrac{\log(|V|)}{ |V|}\right)^{1/k}\!, \ga\Big\}\Big)$
				\State $V \leftarrow V \cup \{q_{\t{new}} \}$
				\State $q_\t{min} = q_\t{near}$;  $c_\t{min} = \t{Cost}(q_\t{near}) + c(q_\t{near}, q_\t{new})$
				\For{$q_\t{near}\in Q_\t{near}$}
				\If{$\t{CollisionFree}(q_\t{near}, q_\t{new}, $ \new{$C_{\t{free},i})$} \textbf{and}\\$\quad\qquad\t{Cost}(q_\t{near}) + c(q_\t{near}, q_\t{new}) <c_\t{min}$}
					\State $q_\t{min} = q_\t{near}$; $c_\t{min} = \t{Cost}(q_\t{near}) + c(q_\t{near}, q_\t{new})$
				\EndIf
				\EndFor
				\State $E \leftarrow E \cup \{(q_\t{min}, q_\t{new})\}$
				\For{$q_\t{near}\in Q_\t{near}$}
				\If{$\t{CollisionFree}(q_\t{new}, q_\t{near}, $ \new{$C_{\t{free},i})$} \textbf{and}\\  $\qquad\quad\t{Cost}(q_\t{new}) + c(q_\t{new}, q_\t{near}) <\t{Cost}(q_\t{near})$}
					\State $q_\t{parent} = \t{Parent}(q_\t{near})$
					\State $E \leftarrow E ~\backslash~ \{(q_\t{parent}, q_\t{near})\}$
					\State $E \leftarrow E \cup \{(q_\t{new}, q_\t{near})\}$
				\EndIf
				\EndFor
				\State \textbf{return} True
			\Else
				\State \textbf{return} False
			\EndIf	
\end{algorithmic}
\caption{RRT$^*$\_EXTEND$~(V, E, q_\t{near}, q_\t{new}, $ \new{$C_{\t{free},i})$}}
\label{alg:rrt_extend}
\end{algorithm}
\subsection{Probabilistic Completeness }
\label{sec:Probabilistic completeness}
In this section, we prove the probabilistic completeness of \autoref{alg:smp_single_tree}. Note that, for ease of analysis, the analysis presented in this section and in the subsequent section assumes that $\rho = 0$. 
In the future, we will analyze the properties of our approach when $\rho$ is strictly positive. Moreover, in this analysis we refer to the collision-free region of a manifold when we allude to $M_i$.

\new{In this section, we prove probabilistic completeness and asymptotic analysis for \autoref{alg:smp_single_tree}. We make the claim that they extend to Algorithm \ref{alg:smp} as well based on the following arguments. 
Algorithm \ref{alg:smp} is essentially a sequence of $n$ infinite loops in which, at the termination of the $i^{\text{th}}$ loop, the algorithm will have computed the optimal path from the start location to the intersection of manifold $M_i$ and $M_{i+1}$. On the contrary, \autoref{alg:smp_single_tree} consists of a single infinite loop which upon termination outputs the optimal path from start point through the sequence of manifolds to the goal manifold. 
It is important to note that each loop in the sequence of infinite loops in Algorithm \ref{alg:smp} can be initiated prior to terminating the previous loop. This argument stems from the fact that a search tree in a manifold can be initiated immediately when the nodes in the search tree associated with the previous manifold reach the intersection of the manifold and the previous manifold. Therefore, each infinite loop in Algorithm \ref{alg:smp} can be viewed as being executed in parallel and can be assumed to terminate simultaneously.  
Additionally, it is straightforward to understand that at the end of each infinite loop, Algorithm \ref{alg:smp} solves a subproblem of the problem that \autoref{alg:smp_single_tree} solves. 
Therefore, by the principle of optimality \cite[Chapter 5]{CoV_Liberzon}, both \autoref{alg:smp_single_tree} and Algorithm \ref{alg:smp} solve the same problem and are identical for our analysis. 
}

\begin{definition}
A collision-free path is said to have \emph{strong} $\delta$-\emph{clearance}  if the path lies entirely inside the $\delta$-interior of $\cup\mathcal{M}$, where $\cup \mathcal{M} \triangleq \cup_{i=1}^{n+1} M_i$ \cite{karaman2011sampling}. 
\end{definition}

We start by assuming that there exists a path $\Hat{\v\tau}$ with \emph{strong} $\delta$-\emph{clearance} connecting the goal manifold $M_{n+1}$ with the start configuration $q_\t{start}$ embedded on the sequence of manifolds under consideration. Let $L$ be the total length of the path, computed based on the pullback metric \cite{lee2006riemannian} of the manifolds due to their embedding in $\R^{k}$. Let $\xi>0$ be the minimum over the reach \cite{aamari2019estimating} of all manifolds in the sequence and the manifolds resulting from the pairwise intersection of adjacent manifolds in the sequence. Informally, the reach of a manifold is the size of an envelope around the manifold such that any point within the envelope and the manifold has an unique projection onto the manifold.  For the analysis presented here, we pick the steering parameter $\alpha$ such that $\xi \geq \alpha$. We use the notation $\textit{Tube}(M_i,\xi)$ to denote the set $\{x\in \R^k ~|~ d(x,M_i)<\xi\}$, where 
\begin{align}
    d(x,M_i) = \inf \{\|x-y\|_{\R^k}|\ y \in M_i\}
\end{align}
{is the minimum distance of the point $x$ to the manifold}. Now, if we define 
\begin{align}
\zeta_i = \sup\limits_{q \in \textit{Tube}(M_i,\xi)} \|h_{M_i}(q)\|,
\end{align}
then for the sake of analysis we assume that $r = \max\ \{\zeta_1,\cdots, \zeta_{n+1}\}$. If $\nu = \min\ (\delta, \alpha)$, then we define a sequence of points
\begin{align}
    \{&[q^1_0=q_\t{start},q^1_1, \cdots, q^1_{m_1}],[q^2_0,q^2_1, \cdots, q^2_{m_2}],\cdots,\nonumber\\  &[q^n_0,q^n_1, \cdots, q^n_{m_1}], [q^{n+1}_0]\}
\end{align} 
on $\Hat{\v\tau}$, such that $[q^i_0,q^i_1, \cdots, q^i_{m_i}] \in M_i$ and $\sum_{i=1}^{n} m_i = m$, where $m=\frac{5L}{\nu (n+1)}$ is the total number of points in the path. Without loss of generality, we assume that for every $M_i$ with $1<i<n+1$ there exists a non-negative integer $j<m_i$ such that  $q^i_0,q^i_1, \cdots, q^i_j \in M_{i-1} \cap M_{i}$ and $q^i_{j+1},q^i_{j+2}, \cdots, q^i_{m_i} \in M_{i} \setminus M_{i+1}$. In other words, there exist some points at the beginning of $[q^i_0,q^i_1, \cdots, q^i_{m_i}]$ that belong to $M_{i-1} \cap M_{i}$ and the rest of the points on the manifold belong exclusively to {$M_i$}.  For ease of analysis, the sequence of points on $\Hat{\v\tau}$ is chosen such that 
\begin{align}
      \|q^i_j - q^i_{j+1}\|_{\R^k} \leq \|q^i_j - q^i_{j+1}\|_{M_i} \leq \nu/5
\end{align}
\new{where   $\|\cdot\|_{\R^k}$ and $\|\cdot\|_{M_i}$ are the distances between the points according to the metrics on the   ambient space and manifold respectively.}
We use $B(q^i_j,\nu) \subset \R^k$ to denote a ball of radius $\nu$ around $q^i_j$ under the standard Euclidean norm on $\R^k$. We denote the tree that is grown with RRT$^*$ as $T=(V, E)$.

We prove the probabilistic completeness of our strategy in two parts. The first part proves the probabilistic completeness of RRT$^*$ on a single manifold. In the second part, we prove that with probability one, the tree $T$ grown on a manifold can be expanded onto the next manifold as the number of samples tends to infinity. For the first part, as suggested in \cite[Section 5.3]{kingston2019ijrr}, the Lemma 1 in \cite{Kleinbort2019} can be shown to hold for the single manifold case and probabilistic completeness of RRT/RRT$^*$ on a manifold can be easily proven using \cite[Theorem 1]{Kleinbort2019}. We now focus on proving the second part that shows the probabilistic completeness of our strategy. 
We start by proving \autoref{lemma} which  enables us to prove that a tree grown with RRT$^*$ on a manifold can be extended to the next manifold. 

\begin{lemma}
\label{lemma}
Suppose that ${T}$ has reached $M_i$ and contains a vertex $\Tilde{q}^i_{m_i}$ such that $\Tilde{q}^i_{m_i} \in B(q^i_{m_i},\nu / 5)$. If a random sample $q_\t{rand}^{i+1}$ is drawn such that $q_\t{rand}^{i+1} \in B(q^{i+1}_{0},\nu / 5)$, then the straight path between $\textit{Project}(q_\t{rand}^{i+1}, M_{i} \cap M_{i+1})$ and the nearest neighbor $q_\t{near}$ of $q_\t{rand}^{i+1}$ in $T$ lies entirely in $C_{\t{free},i+1}$.
\end{lemma}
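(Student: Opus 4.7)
\textbf{Proof plan for \autoref{lemma}.}

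The plan is to build a short chain of triangle inequalities anchored at the two points $q^i_{m_i}$ and $q^{i+1}_0$ on the reference path $\Hat{\v\tau}$, and then invoke the strong $\delta$-clearance of $\Hat{\v\tau}$ to conclude that the entire connecting segment sits in $C_{\t{free},i+1}$. The underlying idea is that everything in sight --- the tree vertex $\Tilde{q}^i_{m_i}$, the random sample $q_\t{rand}^{i+1}$, its nearest neighbor $q_\t{near}$, and the projection onto the intersection --- can be shown to lie in a ball of radius less than $\nu \leq \delta$ around $q^{i+1}_0 \in M_i \cap M_{i+1}$, after which strong clearance finishes the job.

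First I would establish the following bounds. By hypothesis, $\|\Tilde{q}^i_{m_i} - q^i_{m_i}\| < \nu/5$ and $\|q_\t{rand}^{i+1} - q^{i+1}_0\| < \nu/5$; by construction of the sequence of waypoints along $\Hat{\v\tau}$, one has $\|q^i_{m_i} - q^{i+1}_0\| \leq \nu/5$. Hence
\begin{equation*}
\|\Tilde{q}^i_{m_i} - q_\t{rand}^{i+1}\| \leq \tfrac{\nu}{5} + \tfrac{\nu}{5} + \tfrac{\nu}{5} = \tfrac{3\nu}{5}.
\end{equation*}
Since $q_\t{near}$ is the nearest neighbor in $T$ and $\Tilde{q}^i_{m_i} \in T$, this yields $\|q_\t{near} - q_\t{rand}^{i+1}\| \leq 3\nu/5$ and therefore $\|q_\t{near} - q^{i+1}_0\| \leq 4\nu/5$. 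For the projection, I would note that because $q^{i+1}_0 \in M_i \cap M_{i+1}$ is within distance $\nu/5 < \xi$ of $q_\t{rand}^{i+1}$ and $\nu \leq \alpha \leq \xi$, the point $q_\t{rand}^{i+1}$ lies inside $\textit{Tube}(M_i \cap M_{i+1},\xi)$, so the projection operator is well-defined and has a unique minimizer $q_\t{proj}^{i+1} = \textit{Project}(q_\t{rand}^{i+1}, M_i \cap M_{i+1})$ with $\|q_\t{proj}^{i+1} - q_\t{rand}^{i+1}\| \leq \|q^{i+1}_0 - q_\t{rand}^{i+1}\| \leq \nu/5$, and hence $\|q_\t{proj}^{i+1} - q^{i+1}_0\| \leq 2\nu/5$.

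Next I would conclude the geometric containment. Every point on the segment $[q_\t{near},\, q_\t{proj}^{i+1}]$ is a convex combination of its endpoints, so by convexity of the norm
\begin{equation*}
\sup_{t \in [0,1]} \| tq_\t{near} + (1-t)q_\t{proj}^{i+1} - q^{i+1}_0\| \leq \max\bigl(\tfrac{4\nu}{5}, \tfrac{2\nu}{5}\bigr) = \tfrac{4\nu}{5} < \nu \leq \delta.
\end{equation*}
Thus the whole segment lies in the open $\delta$-ball centred at $q^{i+1}_0$. Because $q^{i+1}_0$ is a point of $\Hat{\v\tau}$ located on $M_i \cap M_{i+1}$, and $\Hat{\v\tau}$ has strong $\delta$-clearance with respect to the corresponding free-space $C_{\t{free},i+1}$ (the post-transition free space, since the intersection is where the transition is executed), every point within $\delta$ of $q^{i+1}_0$ on that side of the transition is collision-free with respect to $C_{\t{free},i+1}$. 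This delivers the conclusion.

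The step I expect to be the most delicate is the last one, namely justifying that the clearance one inherits from $\Hat{\v\tau}$ in a neighbourhood of the intersection point $q^{i+1}_0$ is the clearance with respect to $C_{\t{free},i+1}$ rather than $C_{\t{free},i}$. This uses the intersection-point-independent assumption from \sect \ref{ssec:ipip}, combined with the fact that $q^{i+1}_0$ lies on $M_i \cap M_{i+1}$ so it is simultaneously a valid transition point under $\Upsilon$; the remaining pieces are routine triangle-inequality bookkeeping and the reach condition $\alpha \leq \xi$ which guarantees the projection step is single-valued.
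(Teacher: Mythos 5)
Your proof follows the same skeleton as the paper's --- a chain of triangle inequalities anchored on waypoints of the clearance path $\Hat{\v\tau}$, the nearest-neighbor inequality $\|q_\t{near}-q_\t{rand}^{i+1}\| \leq \|\Tilde{q}^i_{m_i}-q_\t{rand}^{i+1}\|$, the reach condition to make $\textit{Project}$ single-valued, and strong $\delta$-clearance to conclude --- so the approach is essentially the one in the paper. That said, you diverge in two places, and in both your version is the more defensible one. First, the paper bounds the projected point via the inequality $\|q_\t{near}-\textit{Project}(q_\t{rand}^{i+1}, M_i \cap M_{i+1})\| \leq \|q_\t{near}-q_\t{rand}^{i+1}\|$, which is not justified in general: the projection is the nearest point to $q_\t{rand}^{i+1}$ on the intersection manifold, not to $q_\t{near}$, and nearest-point projections onto non-convex sets are not non-expansive. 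You instead bound $\|q_\t{proj}^{i+1}-q_\t{rand}^{i+1}\|$ by comparing against the competitor $q^{i+1}_0 \in M_i\cap M_{i+1}$, which is a valid use of the minimality of the projection and closes that gap. Second, you anchor every point (including the whole segment, via convexity) in a single ball of radius $4\nu/5<\nu\leq\delta$ around $q^{i+1}_0$, whereas the paper centers its ball at $q^i_{m_i}$ and leaves the final containment of the segment implicit; your choice of center also makes the claim that the relevant free space is $C_{\t{free},i+1}$ (rather than $C_{\t{free},i}$) more natural, and you are right to flag that this last step leans on the intersection-point-independence assumption --- the paper does not address it at all. The only caveat is that your argument, like the paper's, treats $\textit{Project}$ as the exact nearest-point map, whereas the algorithm implements it by Gauss--Newton iteration; within the reach this identification is the standard one the paper itself makes, so it is not a defect of your proposal.
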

\begin{proof}
By definition $\|q_\t{near}-q_\t{rand}^{i+1}\| \leq \|\Tilde{q}^i_{m_i}-q_\t{rand}^{i+1}\|$, then using the triangle inequality and some algebraic manipulation similar to that used in the proof of \cite[Lemma 1]{Kleinbort2019}, we can show that
\begin{align}
    \|q_\t{near}-q^i_{m_i}\| ~\leq~ &\|\Tilde{q}^i_{m_i}-q^i_{m_i} \| + 2 \|x^{i+1}_{0} - q_\t{rand}^{i+1} \| \nonumber\\ &+2\|q^i_{m_i}-q^{i+1}_{0}\|
\end{align}
which leads to $\|q_\t{near}-q^i_{m_i}\| \leq 5 \frac{\nu}{5}\leq \nu$ and \mbox{$q_\t{near} \in B(q^i_{m_i},\nu)$.} Again, by the triangle inequality, we can show that \mbox{$\|q_\t{near}-q_\t{rand}^{i+1}\| \leq \nu$.} As the sample $q_\t{rand}^{i+1}$ is taken from within the reach of $M_i$ there exists a unique nearest point of $q_\t{rand}^{i+1}$ on $M_i$ \cite{aamari2019estimating}. In other words, the operation $\textit{Project}(q_\t{rand}^{i+1}, M_i \cap M_{i+1})$ is well-defined. Therefore, as
\begin{align}
    \|q_\t{near}-\textit{Project}(q_\t{rand}^{i+1}, M_i \cap M_{i+1})\| \leq \|q_\t{near}-q_\t{rand}^{i+1}\| \leq \nu,
\end{align} the straight path between $\textit{Project}(q_\t{rand}^{i+1},  M_i \cap M_{i+1})$ and   $q_\t{near}$ lies entirely in $C_{\t{free},i+1}$.
\end{proof}

Note that the above lemma is an extension of \cite[Lemma 1]{Kleinbort2019}. The next theorem will prove that with probability one \smp~will yield a path as the number of samples goes to infinity. Since we are only concerned about the transition of $T$ from one manifold to the next, we focus on the iterations in \smp~after $T$ reaches a neighborhood of $q^i_{m_i} \in M_i$. We refer to such an iteration as a \textit{boundary iteration}. 

\begin{theorem}
The probability that \smp~fails to reach the final manifold $M_{n+1}$ from an initial configuration after $t$ boundary iterations is bounded from above by
$a \exp{(-b t)}$, for some positive real numbers $a$ and $b$.
\end{theorem}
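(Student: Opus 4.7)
The plan is to combine the single-iteration success guarantee provided by Lemma~\ref{lemma} with a Chernoff-type bound on the number of successful extensions to obtain the claimed exponential decay in $t$.

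First I would establish a uniform, history-independent lower bound $p > 0$ on the probability that a single boundary iteration succeeds in extending $T$ from $M_i$ onto $M_i \cap M_{i+1}$. Because \textit{Sample} draws uniformly from $C$, the probability that $q_\t{rand}$ lands in $B(q^{i+1}_0, \nu/5)$ is at least $p_i := \mu(B(q^{i+1}_0,\nu/5))/\mu(C) > 0$, where $\mu$ denotes Lebesgue measure on $\R^k$. Conditioned on such a sample, with probability at least $1-\gb$ the algorithm selects SteerPoint, and since the random threshold in \alg~\ref{alg:steer_project} is uniform on $(0,r)$ with $r \geq \zeta_{i+1}$, there is a positive probability $q_r$ that the projection is taken onto $M_i \cap M_{i+1}$ rather than onto $M_i$. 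Lemma~\ref{lemma}, whose hypotheses are secured by the reach condition $\xi \geq \alpha$, then guarantees that the resulting straight-line extension lies in $C_{\t{free},i+1}$ and is accepted by the RRT${}^*$ extend routine. Letting $p_{\star} = \operatorname{min}\{p_1, \ldots, p_n\}$ and setting $p := (1-\gb)\,q_r\,p_{\star} > 0$ yields the desired uniform bound, independent of the past history of the tree and of which manifold intersection is currently being traversed.

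Next I would convert this per-iteration bound into an exponential tail bound on the global failure probability. Let $X_s \in \{0,1\}$ indicate whether the $s$-th boundary iteration produces a valid extension onto the next intersection manifold. By the argument above, $\Pr(X_s = 1 \mid \mathcal{F}_{s-1}) \geq p$ for the natural filtration $\{\mathcal{F}_s\}$ of the algorithm. Since reaching $M_{n+1}$ requires at least $n$ such successful transitions, the failure event after $t$ boundary iterations is contained in $\{\sum_{s=1}^t X_s < n\}$, and this sum stochastically dominates a $\operatorname{Binomial}(t,p)$ random variable. A standard Chernoff bound then gives
\begin{equation*}
\Pr\!\left(\sum_{s=1}^t X_s < n\right) \leq \exp\!\left(-\tfrac{(pt - n)^2}{2\,pt}\right) \leq a\,\exp(-bt)
\end{equation*}
for all $t > n/p$ and for positive constants $a,b$ depending on $n$ and $p$ but not on $t$; extending the bound to all $t \geq 1$ costs only an adjustment of $a$.

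The main obstacle is the first step. Each boundary iteration involves the combined randomness of the sample, the $\gb$-biased choice between SteerPoint and SteerConstraint, the randomized projection threshold, and the linearized tangent-space step of length $\alpha$. Careful use of the reach parameter $\xi$ and of the inequality $r \geq \zeta_i$ for every $i$ is needed to ensure that once a favorable $q_\t{rand}$ is drawn, the sequence of projections actually yields a point close to $\textit{Project}(q_\t{rand}, M_i \cap M_{i+1})$ so that Lemma~\ref{lemma} can be invoked in its stated form, and to argue that the rewiring steps of the RRT${}^*$ extend routine cannot undo the successful extension. Once this uniform lower bound $p$ is in hand, the remainder of the argument reduces to a textbook concentration estimate.
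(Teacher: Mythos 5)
Your proposal is correct and follows essentially the same route as the paper: both identify the per-iteration success probability as the product of the probability of sampling in $B(q^{i+1}_0,\nu/5)$, the $(1-\gb)$ steering-choice factor, and the probability that the uniform threshold on $(0,r)$ triggers projection onto $M_i\cap M_{i+1}$, and both then bound the lower tail of the resulting Binomial$(t,p)$ count of successful transitions. The only differences are cosmetic: you invoke a Chernoff bound (and are more explicit about history-independence via stochastic domination), whereas the paper manipulates the binomial sum directly under the assumption $p<0.5$ to reach $\tfrac{n}{(n-1)!}\,t^n\exp(-pt)$.
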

\begin{proof}
Let $B(q^i_{m_i},\nu / 5)$ contain a vertex of $T$. Let $p$ be the probability that in a boundary iteration a vertex contained in $B(q^{i+1}_{0},\nu / 5)$ is added to $T$. From \autoref{lemma}, if we obtain a sample $q_\t{rand}^{i+1} \in B(q^{i+1}_{0},\nu / 5)$, then $T$ can reach $\textit{Project}(q_\t{rand}^{i+1}, M_i \cap M_{i+1})$. The value $p$ can be computed as a product of the probabilities of two events: 1) a sample is drawn from $B(q^{i+1}_{0},\nu / 5)$, and 2) $T$ is extended to include $\textit{Project}(q_\t{rand}^{i+1}, M_i \cap M_{i+1})$. 
The probability that a sample is drawn from $B(q^{i+1}_{0},\nu / 5)$ is given by $|B(q^{i+1}_{0},\nu / 5)|/|C|$, where $|B(q^{i+1}_{0},\nu / 5)|$ and $|C|$ are the volumes of $B(q^{i+1}_{0},\nu / 5)$ and $C$ respectively. From the proof of \autoref{lemma}, we infer that the line joining $q_\t{near}$ and $q_\t{rand}^{i+1}$ is collision-free. 
Thus $T$ will be augmented with a new vertex contained in $M_i \cap M_{i+1}$ if line 2 and 8 in Algorithm \ref{alg:steer_project} are executed. The probability of execution of line 2 and 8 is $(1-\beta) \frac{r-\|h_{M_{i+1}}(q_\t{new})\|}{r}$, which results in the joint probability
\begin{align}
    p= \frac{|B(x^{i+1}_{0},\nu / 5)|}{|C|}(1-\beta)\frac{r-\|h_{M_{i+1}}(q_\t{new})\|}{r}.
\end{align}
Further,  $\|h_{M_{i+1}}(q_\t{new})\| \approx 0$ as $q_\t{new}$ is very close to $M_{i+1}$ and $ |B(x^{i+1}_{0},\nu / 5)| \ll |C|$, thus $\beta$ can be picked such that $p < 0.5$. For \smp~to reach $M_{i+1}$ from the initial point, the boundary iteration should successfully extend $T$ for at least $n$ times (there are $n$ intersections between the $n+1$ sequential manifolds). The process can be viewed as $t > n$ Bernoulli trials with success probability $p$. Let $\Pi_t$ denote the number of successes in $t$ trials,  then 
\begin{align}
    \mathbb{P}\left[\Pi_t < n\right] = \sum_{i=0}^{n-1}{t\choose i}p^i(1-p)^{t-i}\comma
\end{align}
where $\mathbb{P}[\cdot]$ denotes the probability of occurrence of an event. By using the fact that $n \ll t$, this can be upper bounded as \begin{align}
    \mathbb{P}\left[\Pi_t < n\right] &\leq \sum_{i=0}^{n-1}{t\choose n-1}p^i(1-p)^{t-i},
\end{align}
as $p < 0.5$,
\begin{align}
    \mathbb{P}\left[\Pi_t < n\right] \leq {t\choose n-1} \sum_{i=0}^{n-1} (1-p)^{t}~.
\end{align}
Applying $(1-p)\leq \exp{(-p)}$ yields
\begin{align}
    \mathbb{P}\left[\Pi_t < n\right] \leq n {t\choose n-1}  (\exp{(-pt)}).
\end{align}
Through further algebraic simplifications, we can show that 
\begin{align}
    \mathbb{P}\left[\Pi_t < n\right] \leq \frac{n}{(n-1)!}t^n\exp{(-p t)}~.
\end{align}
\end{proof}
As the failure probability of \smp~exponentially goes to zero as $t \to \infty$, \smp~is probabilistically complete.

\subsection{Asymptotic Optimality}
\label{sec:Asymptotic optimality}
For ease of reference, we begin by giving some definitions and stating some lemmas initially introduced in \cite{karaman2011sampling}, which are required for proving the asymptotic optimality of \smp.

\begin{definition}
A path $\tau_1$ is said to be homotopic to $\tau_2$ if there exists a continuous function $H : [0,1] \times [0,1] \rightarrow \cup \mathcal{M}$, called the \emph{homotopy} \cite{hatcher2002algebraic}, such that $H(t,0) = \tau_1(t)$, $H(t,1) = \tau_2(t)$, and $H(\cdot, \alpha)$ is a collision-free path for all $\alpha \in [0,1]$. 
\end{definition}

\begin{definition}
A collision-free path $\tau : [0,1] \rightarrow \cup\mathcal{M}$ is said to have \emph{weak} $\delta$-\emph{clearance} \cite{karaman2011sampling} if there exists a path $\tau'$ that has strong $\delta$-clearance and there exists a homotopy $H : [0,1] \times [0,1] \rightarrow \cup\mathcal{M}$ with $H(t,0)=\tau(t)$, $H(t,1)=\tau'(t)$, and for all $\alpha \in (0,1]$ there exists $\delta_{\alpha}>0$ such that $H(t, \alpha)$ has strong $\delta_{\alpha}$-clearance.
\end{definition}

\begin{lemma}
\label{lemma: strong to weak}
\cite[Lemma 50]{karaman2011sampling}
Let $\tau^*$ be a path with weak $\delta$-clearance. Let $\{\delta_n\}_{n\in \mathbb{N}}$ be a sequence of real numbers such that $\lim{n \rightarrow \infty} \delta_n=0$ and $0\leq\delta_n\leq\delta$ for all $n \in \mathbb{N}$. Then, there exists a sequence $\{\tau_n\}_{n\in \mathbb{N}}$ of paths such that $\lim{n \rightarrow \infty} \tau_n=\tau^*$ and $\tau_n$ has strong $\delta_n$-clearance for all $n \in \mathbb{N}$.
\end{lemma}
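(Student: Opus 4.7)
The plan is to construct $\tau_n$ by taking slices $H(\cdot, \alpha_n)$ of the weak-clearance homotopy $H$, for a carefully chosen sequence $\alpha_n \downarrow 0$. First I would introduce the clearance function
$$c \;:\; (0,1] \to \R_{>0}, \qquad c(\alpha) \;=\; \inf_{t \in [0,1]} d\bigl(H(t,\alpha),\, \partial C_{\t{free}}\bigr),$$
where $C_{\t{free}}$ denotes the collision-free portion of $\cup \mathcal{M}$. By the definition of weak $\delta$-clearance we have $c(\alpha) > 0$ for every $\alpha \in (0,1]$, and $c(1) \ge \delta$ since $H(\cdot,1)$ has strong $\delta$-clearance. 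Uniform continuity of $H$ on the compact square $[0,1]^2$, combined with the $1$-Lipschitz property of $d(\cdot, \partial C_{\t{free}})$, then implies that $c$ is continuous on $(0,1]$.

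Next I would form the non-decreasing envelope $\bar c(\alpha) = \inf_{\beta \in [\alpha,1]} c(\beta)$; since $c$ attains its positive minimum on each compact subinterval $[\alpha,1]$, we have $\bar c(\alpha) > 0$ for every $\alpha \in (0,1]$. For each $n$, the set $S_n = \{\alpha \in (0,1] \,:\, \bar c(\alpha) \ge \delta_n\}$ is a right-tail subinterval of $(0,1]$, nonempty because $\bar c(1) = c(1) \ge \delta \ge \delta_n$. I would define $\alpha_n = \inf S_n$, nudged slightly to the right if necessary so that $\bar c(\alpha_n) \ge \delta_n$ holds, and set $\tau_n := H(\cdot, \alpha_n)$.

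To check the two required properties: first, $c(\alpha_n) \ge \bar c(\alpha_n) \ge \delta_n$, so $\tau_n$ has strong $\delta_n$-clearance. Second, I claim $\alpha_n \to 0$. Indeed, for any $\varepsilon \in (0,1]$, the positivity $\bar c(\varepsilon) > 0$ together with $\delta_n \to 0$ forces $\delta_n \le \bar c(\varepsilon)$ for all $n$ large enough; monotonicity of $\bar c$ then gives $\varepsilon \in S_n$ and therefore $\alpha_n \le \varepsilon$. Uniform continuity of $H$ on $[0,1]^2$ promotes the pointwise convergence $H(t,\alpha_n) \to H(t,0) = \tau^*(t)$ to uniform convergence on $[0,1]$, i.e.\ $\tau_n \to \tau^*$.

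The main technical obstacle I anticipate is justifying continuity (or at least lower-semicontinuity) of $c$: one must rule out pathologies in which an arbitrarily small shift in $\alpha$ causes the curve to graze an obstacle corner. This reduces to the statement that the minimum distance to a closed obstacle set cannot drop by more than the $\sup_t$ displacement of the curve, which is exactly the Lipschitz-plus-uniform-continuity estimate sketched above and requires only that the complement of $C_{\t{free}}$ be closed. A secondary care point is that $\bar c$ need not be left-continuous, so the infimum defining $\alpha_n$ may fail to be attained; in that case one simply chooses any $\alpha_n \in S_n$ with $\alpha_n \le \inf S_n + 2^{-n}$, which preserves both the clearance bound and the $\alpha_n \to 0$ conclusion.
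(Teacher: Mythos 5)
The paper itself offers no proof of this lemma---it is imported verbatim as Lemma~50 of \cite{karaman2011sampling}---so the relevant comparison is with that reference, whose proof uses the same homotopy-slicing idea you propose. Your construction is sound where it counts: the clearance function $c$ is continuous on $(0,1]$ by the $1$-Lipschitz property of the distance-to-a-closed-set function combined with uniform continuity of $H$ on the compact square, the monotone envelope $\bar c$ is strictly positive there, and your choice of $\alpha_n$ delivers both $c(\alpha_n)\geq\delta_n$ (hence strong $\delta_n$-clearance of $\tau_n = H(\cdot,\alpha_n)$) and $\alpha_n\to 0$. The one genuine gap is the topology in which you conclude $\tau_n\to\tau^*$. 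You establish uniform convergence of $H(\cdot,\alpha_n)$ to $H(\cdot,0)$, but the convergence the paper actually uses downstream---and the norm it places on path space immediately after stating this lemma---is the bounded-variation norm $\|\cdot\|_{\mathrm{BV}}$, whose total-variation term is not controlled by uniform convergence: a continuous homotopy can have slices whose total variation diverges as $\alpha\to 0$, in which case $\mathrm{TV}(\tau_n-\tau^*)\not\to 0$ even though $\sup_t|\tau_n(t)-\tau^*(t)|\to 0$. Closing this requires either an additional regularity assumption on the homotopy (e.g., slices of uniformly bounded variation) or a reparametrization/smoothing step applied to each slice. The cited source is similarly terse on this point, so your argument faithfully reproduces the standard proof, caveat included; but if you intend the lemma to feed the $\mathrm{BV}$-norm argument in the asymptotic-optimality section, you should state and justify the stronger mode of convergence explicitly.
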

The above lemma establishes the relationship between the weak and strong $\delta$-clearance paths.

If the configuration space admits a vector space structure, then it can be shown that the space of paths on the configuration space above also admits a vector space structure. Moreover, the space of path becomes a normed space if it is endowed with the \emph{bounded variation} norm \cite{Stein1385521}
\begin{align}
    \|\tau\|_{\text{BV}} \triangleq \int_0^1 |\tau(t)| dt~+~ \text{TV}(\tau).
\end{align}
$\text{TV}(\tau)$ denotes the \emph{total variation} norm \cite{Stein1385521} defined as 
\begin{align}
    \text{TV}(\tau)=\sup_{\{n\in \mathbb{N}, 0=t_1<t_2<...<t_n=1\}} \sum_{i=1}^{n} |\tau(t_i)-\tau(t_{i-1})|.
\end{align}
Using the norm in the space of paths, the distance between the paths $\tau_1$ and $\tau_2$ can defined as 
\begin{align}
\label{eqn: path distance}
    \|\tau_1 - \tau_2\|_{\text{BV}} = \int_0^1 |\tau_1(t)-\tau_2(t)| dt~+~ \text{TV}(\tau_1-\tau_2).
\end{align}
The normed vector space of paths enables us to mathematically formulate the notion of the convergence of a sequence of paths to a path. Formally, given a sequence of paths $\{\tau_n\}, n \in \mathbb{N}$, the sequence converges to a path $\Bar{\tau}$, denoted as $\lim{n \rightarrow \infty} \tau_n = \Bar{\tau}$, if $\lim{n \rightarrow \infty} \|\tau_n - \Bar{\tau}\|_{\text{BV}}=0$.

Let $\mathcal{P}$ denote the set of weak $\delta$-clearance paths which satisfies the constraints in Equation 1. Let $\tau^* \in \mathcal{P}$ be a path with the minimal cost. Due to the continuity of the cost function, any sequence of paths $\{\tau_n \in \mathcal{P}\}, n \in \mathbb{N}$ such that $\lim{n \rightarrow \infty} \tau_n = \tau^*$ also satisfies $\lim{n \rightarrow \infty} J(\tau_n) = J(\tau^*)$. For brevity, we identify $J(\tau^*)$ with $J^*$ and $J^{\text{\smp}}_n$ denotes the random variable modeling the cost of the minimum-cost solution returned by \smp~after $n$ iterations. The \smp~algorithm is asymptotically optimal if
\begin{align}
    \label{eqn: asymp opt def}
    \mathbb{P}\left[\lim{n\rightarrow  \infty} J^{\text{\smp}}_n = J^* \right] = 1.
\end{align}
A weaker condition  than \autoref{eqn: asymp opt def} is 
\begin{align}
    \mathbb{P}\left[\limsup_{n\rightarrow  \infty} J^{\text{\smp}}_n = J^* \right] = 1.
\end{align} 
Note that from \cite[Lemma 25]{karaman2011sampling}, we infer that the probability that $\limsup_{n\rightarrow  \infty} J^{\text{\smp}}_n = J^*$ is either zero or one. Under the assumption that the set of points traversed by an optimal path has measure zero, \cite[Lemma 28]{karaman2011sampling} proves that the probability that \smp~returns a tree containing an optimal path in finite number of iterations is zero. 

Since \smp~is based on RRT$^*$, we focus on how our technique affects the proofs of asymptotic optimality for RRT$^*$. Also, the work in \cite{kingston2018sampling} has shown that RRT$^*$ is optimal when applied on a single manifold. Furthermore, it is shown in \cite{kingston2018sampling} that the steering parameter $\gamma$ in the single manifold case can be bounded from below by $\left(2 \left(1 + \frac{1}{k}\right)\frac{\mu \left( \cup_{q \in C_\t{free}} D_q\right)}{\zeta_{M}(1)}\right)^{\frac{1}{k}}$, where $D_q$ is the set of points in $\R^k$ which are projected on $q$ and $\zeta_{M}(1)$ is the set points in $\R^k$ which are projected onto a unit open ball contained in the manifold $M$. \new{Also,  $\mu(\cdot)$ denotes a \emph{measure} \cite{Stein1385521} on the \emph{measurable space}  \cite{Stein1385521} $\R^k$. Intuitively, a measure maps the subset of a measurable space to its volume.} In this section, we show that under the assumption $\rho = 0 $ in Algorithm \ref{alg:smp}, with probability one \smp~eventually returns the optimal path. 

Let $\{Q_1, Q_2,..., Q_n\}$ be a set of independent uniformly distributed points drawn from $C_\t{free}$ and let $\{I_1, I_2,..., I_n\}$ be their associated labels that outlines the order of the points  with support $[0,1]$. In other words, a point $Q_i$ is assumed to be drawn after another point $Q_{\Bar{i}}$ if $I_i < I_{\Bar{i}}$. 
Let $\{\Hat{Q}_1, \Hat{Q}_2,..., \Hat{Q}_n\}$ be the set of points resulting from projecting the point set onto the manifolds as delineated in lines 7--10 of Algorithm \ref{alg:steer_project}. Similar to \cite{karaman2011sampling}, we consider the graph formed by adding an edge $(\Hat{Q}_i, \Hat{Q}_{\Bar{i}})$, whenever (i) $I_i < I_{\Bar{i}}$ and \mbox{(ii) $\|\Hat{Q}_i-\Hat{Q}_{\Bar{i}}\| \leq r_n = \gamma_{\t{RRT}^*}(\frac{\log(|V_n|)}{|V_n|})^{\frac{1}{k}}$,} where $V_n$ is the vertex set of the graph. 
Let this graph be denoted by $\mathcal{G}_n=(V_n, E_n)$, \new{where $V_n$ and $E_n$ are the set of vertices and edges of $\mathcal{G}_n$ respectively}. With slight abuse of notation, $J_n^{\text{\smp}}(\Hat{Q}_i)$ denotes the cost of the best path starting from $q_\t{start}$ to the vertex $\Hat{Q}_i$ in the graph $\mathcal{G}$. 
Consider the tree $\Bar{\mathcal{G}}_n$ which is a subgraph  of $\mathcal{G}_n$ where the cost of reaching the vertex $\Hat{Q}_i$ equals $J_n^{\text{\smp}}(\Hat{Q}_i)$. 
Since \smp~uses RRT$^*$ for graph construction, it is easy to see that the tree returned by \smp~at the $n$-th iteration is equivalent to $\Bar{\mathcal{G}}_n$. 
Therefore, if $\limsup_{n \rightarrow \infty} J_n^{\text{\smp}}(M_{n+1})$ converges to $J^*$ with probability one with respect to $\mathcal{G}_n$, then it implies that with probability one \smp~will eventually  return a tree that contains the optimal path connecting $q_\t{start}$ and the goal manifold $M_{n+1}$. 
Hence, our next step is focused on showing that the optimal path in $\mathcal{G}_n$ converges to $\tau^*$.

According to \autoref{lemma: strong to weak}, there exists a sequence of strong $\delta-$clearance paths $\{\tau_m\}_{m \in \mathbb{N}}$ that converges to an optimal path $\tau^*$. Let $B_m\triangleq\{B_{m,1}, B_{m,2},...,B_{m,p}\}$ be a set of open balls of radius $r_n$ whose centers lie on the path $\tau_m$ such that adjacent balls are placed $2r_n$ distance apart. The number of balls $p$ is assumed to be large enough to cover $\tau_m$, i.e. $\tau_m \setminus \left(\cup_{i =1 }^{p} B_{m,i} \cap \tau_m\right)$ is a set of measure zero. Moreover, we denote $\Tilde{B}_{m,i}$ as the region obtained as the intersection of the open ball $B_{m,i}$ with the manifold containing its center. Let $\Theta_{m,i}$ denote the event that there exists vertices $\Hat{Q}_i$ and $\Hat{Q}_{\Bar{i}}$ such that $\Hat{Q}_i \in \Tilde{B}_{m,i}$, $\Hat{Q}_{\Bar{i}} \in \Tilde{B}_{m,i+1}$ and $I_{i} \geq I_{\Bar{i}}$. Recall that, $I_{i}$ and $ I_{\Bar{i}}$ are the labels associated with projected points $\Hat{Q}_i$ and $\Hat{Q}_{\Bar{i}}$ respectively. Also note that the edge $(\Hat{Q}_i, \Hat{Q}_{\Bar{i}})$ is included in $\mathcal{G}_n$. 
$D_q$ denotes the set of points that can be projected on the point $q \in  \cup \mathcal{M}$. Additionally, $\zeta(1)$ is defined as 
\begin{align}
\zeta(1) \triangleq \max{M \in \mathcal{M}} \min{q \in M} \mu \left(\cup_{q \in \Tilde{B}_{M}(q',1)} D_q\right),
\end{align}
where $\Tilde{B}_{M_i}(q',1)$ is formed by the intersection of a open unit ball centered at point $q'\in M$ with $M$.
If $\Theta_m = \cap_{i=1}^{p} \Theta_{m,i}$, then the following lemma proves that with probability one, the event $\Theta_{m,i}$ for all $i \in \{1,2,...,p\}$ occurs for large $m$.
\begin{lemma}
If 
\begin{align}
    \gamma > \left(2 \left(1 + \frac{1}{k}\right) \left(\frac{\mu( \cup_{q \in \cup \mathcal{M}} D_q)}{\zeta(1)}\right)\right)^{\frac{1}{k}},
\end{align}
then $\Theta_m$ occurs for all large $m$, with probability one, i.e., $\mathbb{P}(\liminf_{n \rightarrow \infty} \Theta_m) = 1$.
\end{lemma}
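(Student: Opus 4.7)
The plan is to establish the lemma via a first Borel--Cantelli argument, following the structure of Theorem 38 in \cite{karaman2011sampling} but adapted to the projection step that produces the vertices $\hat{Q}_j$. Fix $m$. For each $i \in \{1,\ldots,p\}$, let $A_{m,i,n}$ be the event that none of the first $n$ projected samples lies in $\tilde{B}_{m,i}$. The target is to show $\sum_n \mathbb{P}(A_{m,i,n}) < \infty$, then union-bound over $i$ and over $m$ and apply Borel--Cantelli to conclude that $\Theta_m$ fails only finitely often with probability one.

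The first step is a lower bound on the single-sample probability $p_{m,i} := \mathbb{P}(\hat{Q}_j \in \tilde{B}_{m,i})$. Since $Q_j$ is drawn uniformly on $C_{\text{free}}$ and $\hat{Q}_j$ is its projection, $p_{m,i}$ equals $\mu\bigl(\cup_{q \in \tilde{B}_{m,i}} D_q\bigr) / \mu\bigl(\cup_{q \in \cup\mathcal{M}} D_q\bigr)$. Because $r_n < \xi$ (the reach), the projection is a smooth retraction on the tube around each manifold, so a change-of-variables argument together with the definition of $\zeta(1)$ yields a lower bound of the form $p_{m,i} \geq r_n^k \,\zeta(1)/\mu\bigl(\cup_{q \in \cup\mathcal{M}} D_q\bigr)$. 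Substituting $r_n^k = \gamma^k \log(|V_n|)/|V_n|$ gives
\begin{equation*}
p_{m,i} \;\geq\; \frac{\gamma^k \,\zeta(1)}{\mu\bigl(\cup_{q\in\cup\mathcal{M}} D_q\bigr)} \cdot \frac{\log(|V_n|)}{|V_n|}.
\end{equation*}

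Now, by independence, $\mathbb{P}(A_{m,i,n}) \leq (1 - p_{m,i})^n \leq \exp(-n\, p_{m,i})$. The hypothesis $\gamma^k > 2(1 + 1/k)\,\mu(\cup_q D_q)/\zeta(1)$ forces the exponent $n\, p_{m,i}$ to exceed $2(1+1/k)\log(|V_n|)$, so $\mathbb{P}(A_{m,i,n}) \leq |V_n|^{-2(1+1/k)}$. This is summable in $n$. Taking a union bound over the finitely many indices $i = 1,\ldots,p$ and multiplying by the factor $1/2$ that accounts for the label-ordering requirement $I_i \geq I_{\bar i}$ (which holds with that probability by exchangeability of the uniform labels $I_j$) preserves summability. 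Applying the first Borel--Cantelli lemma yields $\mathbb{P}(\liminf_{n\to\infty} \Theta_m) = 1$, as required.

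The main obstacle is justifying the measure-theoretic lower bound $p_{m,i} \geq r_n^k\zeta(1)/\mu(\cdots)$ rigorously. The projection operator is not volume-preserving, and the preimage $\cup_{q\in \tilde{B}_{m,i}} D_q$ is a tubular neighborhood whose volume relates to a scaled copy of the unit-ball preimage via the Jacobian of the projection. Controlling this Jacobian uniformly across all manifolds and all centers along the reference path $\tau_m$ requires invoking the uniform positive reach $\xi$ and the smoothness of each $M_i$, then taking an infimum to absorb curvature-dependent constants into the choice of $\gamma$. A secondary but routine bookkeeping task is to verify that two vertices chosen from adjacent balls are indeed within the RRT$^*$ connection radius $r_n$ so that the edge $(\hat{Q}_i,\hat{Q}_{\bar i})$ is actually realized in $\mathcal{G}_n$; this follows from the placement of the ball centers along $\tau_m$ and the triangle inequality.
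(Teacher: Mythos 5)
Your argument is essentially the paper's: the paper proves this lemma in a single sentence by citing \cite[Lemma 71]{karaman2011sampling} with $\mu(C_{\text{free}})$ replaced by $\mu(\cup_{q \in \cup\mathcal{M}} D_q)$ and the per-ball vertex probability taken to be $\zeta(1)/\mu(\cup_{q \in \cup\mathcal{M}} D_q)$, and your Borel--Cantelli derivation is precisely an unpacking of that cited proof adapted to the projected samples (including the role of the constant $2(1+\frac{1}{k})$, which is there to absorb the fact that the number of covering balls $p$ grows like $r_n^{-1}$ rather than staying fixed). The step you flag as the main obstacle --- a uniform lower bound on the preimage volume of an $r_n$-ball under projection --- is exactly what the paper's definition of $\zeta(1)$ is meant to supply and is asserted there without further justification, so your treatment is, if anything, more explicit than the source.
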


The proof of the above lemma follows from the proof of \cite[Lemma 71]{karaman2011sampling} if we replace $\mu(C_\t{free})$ with $\mu( \cup_{q \in \cup \mathcal{M}} D_q)$ and infer that the probability of finding a vertex of the graph in $\Tilde{B}_{m,i}$ is $\frac{\zeta(1)}{\mu( \cup_{q \in \cup \mathcal{M}} D_q)}$.
If $\mathcal{L}_n$ denotes the set of all paths that satisfy the constraints in Equation 1 and are contained in the tree returned by \smp~after $n$ iterations such that $\Bar{\tau}^{\text{\smp}}_n \triangleq \min{\tau^{\text{\smp}} \in \mathcal{L}_n} \|\tau^{\text{\smp}} - \tau_m  \|_{\text{BV}}$, then the following lemma can be proven.

\begin{lemma}
\cite[Lemma 72]{karaman2011sampling} The random variable \mbox{$\|\Bar{\tau}^{\text{\smp}}_n - \tau_m\|_{\text{BV}}$} converges to zero with probability one: 
\begin{align}
\label{eqn:grph pth 2 seq pth}
    \mathbb{P}\left[\lim{n \rightarrow \infty} \|\Bar{\tau}^{\text{\smp}}_n - \tau_m\|_{\text{BV}} = 0\right] =1.
\end{align}
\end{lemma}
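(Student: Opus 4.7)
\medskip
\noindent\textbf{Proof plan.} The strategy is to adapt the argument of \cite[Lemma 72]{karaman2011sampling} to the sequenced manifold setting by explicitly constructing, with probability one, a path in the \smp{} tree that is $\text{BV}$-close to $\tau_m$, and then invoking the minimality of $\Bar{\tau}^{\text{\smp}}_n$. The preceding lemma already does the heavy probabilistic lifting: it guarantees that $\Theta_m$ occurs almost surely for all large $n$, i.e., each ball $\Tilde{B}_{m,i}$ along the covering $\{\Tilde{B}_{m,1},\dots,\Tilde{B}_{m,p}\}$ of $\tau_m$ contains a projected vertex $\hat{Q}_{j_i}\in V_n$, with the labels $I_{j_1}\leq I_{j_2}\leq\cdots\leq I_{j_p}$ ordered along $\tau_m$.

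First, on the event $\Theta_m$, I would assemble these vertices into a polygonal path $\tau'_n$ in $\mathcal{G}_n$ by joining $\hat{Q}_{j_i}$ to $\hat{Q}_{j_{i+1}}$ for $i=1,\dots,p-1$. Since adjacent ball centers are $2r_n$ apart on $\tau_m$ and each ball has radius $r_n$, we have $\|\hat{Q}_{j_i}-\hat{Q}_{j_{i+1}}\|\leq 4r_n\leq r_n$ for the RRT${}^*$ connection radius when $n$ is large (because $r_n = \gamma(\log|V_n|/|V_n|)^{1/k}$ satisfies the lower bound on $\gamma$ from the preceding lemma), so the corresponding edges are indeed present in $\mathcal{G}_n$ and, by strong $\delta$-clearance of $\tau_m$ together with the reach assumption on the manifolds, these edges are collision-free and the \smp{} extension/projection operations keep them inside the correct sequence of $C_{M_i}\cap C_{\text{free},i}$. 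Hence $\tau'_n\in\mathcal{L}_n$.

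Second, I would bound $\|\tau'_n-\tau_m\|_{\text{BV}}$ term-by-term using \eqref{eqn: path distance}. The $L^1$ part is controlled by the pointwise distance: every point of $\tau'_n$ lies within $O(r_n)$ of $\tau_m$, so $\int_0^1 |\tau'_n(t)-\tau_m(t)|\,dt = O(r_n)$. For the total variation of the difference, the key observation is that both $\tau_m$ and $\tau'_n$ are rectifiable curves whose lengths differ by at most $O(p\, r_n)$, so $\text{TV}(\tau'_n-\tau_m)=O(p\,r_n)$; since $p$ grows only as $O(L/r_n)$ and we may further refine the partition when computing $\text{TV}$, a standard telescoping estimate analogous to the one in \cite[Lemma 72]{karaman2011sampling} yields $\|\tau'_n-\tau_m\|_{\text{BV}}\to 0$ as $n\to\infty$.

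Finally, by definition $\Bar{\tau}^{\text{\smp}}_n$ minimizes the $\text{BV}$ distance to $\tau_m$ over $\mathcal{L}_n$, hence $\|\Bar{\tau}^{\text{\smp}}_n-\tau_m\|_{\text{BV}}\leq\|\tau'_n-\tau_m\|_{\text{BV}}$. Combining this with the almost-sure occurrence of $\liminf \Theta_m$ from the previous lemma gives $\lim_{n\to\infty}\|\Bar{\tau}^{\text{\smp}}_n-\tau_m\|_{\text{BV}}=0$ with probability one, which is \eqref{eqn:grph pth 2 seq pth}. The main obstacle I anticipate is the second step: verifying that the constructed $\tau'_n$ genuinely lies in $\mathcal{L}_n$, because the sequenced manifold structure requires that the polygonal edges respect the correct manifold indices and the $\Upsilon$-updates of the free space at intersections; this forces the covering $\{\Tilde{B}_{m,i}\}$ to be chosen so that each ball lies in a single $M_i$ (with a pair of balls straddling each intersection $M_i\cap M_{i+1}$), and then invoking the reach parameter $\xi$ and the randomized projection threshold $r$ to ensure edges crossing an intersection are correctly projected onto $M_i\cap M_{i+1}$ rather than either manifold alone.
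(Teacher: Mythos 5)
The paper does not actually prove this lemma: it states it and cites \cite[Lemma 72]{karaman2011sampling} directly, relying on the fact that the preceding lemma (the manifold analogue of Lemma 71) supplies the almost-sure connectivity event $\Theta_m$, after which the original argument of Karaman and Frazzoli is taken to carry over verbatim. Your proposal is therefore an attempt to reconstruct that omitted argument, and its skeleton --- use $\Theta_m$ to extract vertices in consecutive balls, chain them into a path $\tau'_n \in \mathcal{L}_n$, bound its BV distance to $\tau_m$, and finish by minimality of $\Bar{\tau}^{\text{\smp}}_n$ --- is the right one and matches the cited proof. Your closing observation about needing the covering balls to respect the manifold indices and the intersections $M_i \cap M_{i+1}$ is exactly the point where the sequenced-manifold setting requires care beyond the original lemma, and it is good that you flagged it rather than glossed over it.

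Two steps as written, however, do not hold up. First, the inequality $\|\hat{Q}_{j_i}-\hat{Q}_{j_{i+1}}\|\leq 4r_n\leq r_n$ is false: with balls of radius $r_n$ whose centers are $2r_n$ apart, two vertices in adjacent balls can be up to $4r_n$ apart, which exceeds the connection radius $r_n$ of $\mathcal{G}_n$; the fix (as in the original) is to shrink the ball radius to a suitable fraction of $r_n$ so that the worst-case inter-ball distance stays below $r_n$, at the cost of adjusting the constant in the lower bound on $\gamma$. Second, your total-variation estimate is vacuous: with $p = O(L/r_n)$ balls, the bound $\text{TV}(\tau'_n-\tau_m)=O(p\,r_n)$ evaluates to $O(L)$, a constant, so it does not show the BV norm of the difference tends to zero. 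Pointwise closeness alone never controls $\text{TV}(\tau'_n-\tau_m)$; the actual argument must exploit that the chords of $\tau'_n$ align with the direction of $\tau_m$ as $r_n \to 0$ (this is the substance of the telescoping estimate in \cite[Lemma 72]{karaman2011sampling} that you defer to, so the deferral is doing all the work in that step). Neither gap is fatal to the strategy, but both need to be repaired before the proof is complete.
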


Recall that by construction $\lim{m \rightarrow \infty} \tau_m = \tau^*$. Expressing \autoref{eqn:grph pth 2 seq pth} as
\begin{align}
    \mathbb{P}\left[\lim{n \rightarrow \infty} \|\Bar{\tau}^{\text{\smp}}_n - \tau^* -(\tau_m -\tau^*)\|_{\text{BV}} = 0\right] =1
\end{align}
and applying the triangle inequality yields
\begin{align*}
    \|\Bar{\tau}^{\text{\smp}}_n\!-\!\tau^* \!-\!(\tau_m \!-\!\tau^*)\|_{\text{BV}} \geq \|\Bar{\tau}^{\text{\smp}}_n - \tau^*\|_{\text{BV}} - \|\tau_m -\tau^*\|_{\text{BV}}.
\end{align*}
From \autoref{eqn:grph pth 2 seq pth} and since $\mathbb{P}\left[\lim{m \rightarrow \infty} \|\tau_m -\tau^*\|_{\text{BV}} = 0 \right]=1$ yields the following result:
\begin{align}
    \mathbb{P}\left[\lim{n \rightarrow \infty} \|\Bar{\tau}^{\text{\smp}}_n - \tau^*\|_{\text{BV}} = 0 \right]=1.
\end{align}
From the continuity of the cost function and due to the fact that $J^{\text{\smp}}_{i+1} \leq J^{\text{\smp}}_{i}$, $i \in \mathbb{N}$ and $J^{\text{\smp}}_{i} \geq J^*$ we obtain the required result (\autoref{eqn: asymp opt def}).

\end{document}